\documentclass[10pt,twocolumn,letterpaper]{article}

\usepackage{cvpr}              
\usepackage{algorithm}
\usepackage{algorithmic}
\usepackage{booktabs}
\usepackage{multirow}
\usepackage{tabularx, booktabs} 
\usepackage{threeparttable}
\usepackage{diagbox}  
\usepackage{amsmath, amssymb}
\usepackage{amsthm, mathtools}
\newtheorem{definition}{Definition}[section]
\newtheorem{lemma}{Lemma}[section]

\newtheorem{theorem}{Theorem}[section]

\newtheorem{corollary}{Corollary}[section]
\newtheorem{remark}{Remark}[section] 
\usepackage{subcaption}
\usepackage[percent]{overpic}
\usepackage{pifont}
\usepackage{xcolor}  
\usepackage{thmtools} 
\usepackage{adjustbox}
\usepackage[table]{xcolor}
\definecolor{dpssu}{HTML}{FFF9C4}  
\definecolor{midgray}{HTML}{E6E6E6}  
\definecolor{EC805A}{HTML}{EC805A}
\definecolor{4575b4}{HTML}{4575b4}
\usepackage{pifont}
\usepackage{contour}            
\definecolor{staryellow}{RGB}{255,205,0}
\contourlength{0.12ex}          

\newcolumntype{Y}{>{\centering\arraybackslash}X}
\newcolumntype{L}{>{\raggedright\arraybackslash}X}
\definecolor{MorandiBlue}{HTML}{2A55A4}
\definecolor{vpdr}{HTML}{D7F6FF}  
\usepackage{diagbox}

\makeatletter
\g@addto@macro\normalsize{%
  \setlength\abovedisplayskip{4pt}%
  \setlength\belowdisplayskip{4pt}%
  \setlength\abovedisplayshortskip{0pt plus 2pt}%
  \setlength\belowdisplayshortskip{4pt plus 2pt minus 2pt}%
  \setlength\jot{2pt} 
}
\makeatother
 
\usepackage{tikz}
\usepackage[utf8]{inputenc}
 
\newcommand{\blackcircnum}[1]{%
  \tikz[baseline=-0.75ex]{
    \node[
      shape=circle,
      draw=black,
      fill=black,
      text=white,
      inner sep=0.5pt,
      font=\scriptsize\bfseries,
      minimum size=0.8em
    ] (char) {#1};
  }%
} 

\usepackage{marvosym} 

\definecolor{cvprblue}{rgb}{0.21,0.49,0.74}
\usepackage[pagebackref,breaklinks,colorlinks,allcolors=cvprblue]{hyperref}

\def\paperID{5310} 
\def\confName{CVPR}
\def\confYear{2026}

\title{Taming Noise-Induced Prototype Degradation for Privacy-Preserving Personalized Federated Fine-Tuning}

\author{
Yuhua Wang$^{1}$,
Qinnan Zhang$^{1}$\textsuperscript{*},
Xiaodong Li$^{2}$,
Huan Zhang$^{1}$,
Yifan Sun$^{2}$\textsuperscript{*},
Wangjie Qiu$^{1}$,\\
Hainan Zhang$^{1}$,
Yongxin Tong$^{3}$,
Zhiming Zheng$^{1}$\\  
$^{1}$School of Artificial Intelligence, Beihang University\\
$^{2}$School of Statistics, Renmin University of China\\
$^{3}$School of Computer Science and Engineering, Beihang University \\
\texttt{\{yuhuawang, zhangqn\}@buaa.edu.cn; 
sunyifan@ruc.edu.cn}  
}

\begin{document}
\maketitle

\begingroup
\renewcommand\thefootnote{\fnsymbol{footnote}}
\footnotetext[1]{Corresponding Authors. 
This work was supported by Beijing Advanced Innovation Center for Future
Blockchain and Privacy Computing. 
}
\endgroup

\begin{abstract} 
Prototype-based Personalized Federated Learning (ProtoPFL) enables efficient multi-domain adaptation by communicating compact class prototypes, but directly sharing them poses privacy risks.
A common defense involves per-example $\ell_2$ clipping before prototype computation to bound  sensitivity, followed by isotropic Gaussian noise to enforce Local Differential Privacy (LDP). However, Isotropic Gaussian Prototype Perturbation (IGPP) typically over-perturbs discriminative dimensions and struggles to balance the clipping threshold with representation fidelity.
In this paper, we propose \textbf{VPDR}, a client-side privacy plug-in that seamlessly integrates into existing ProtoPFLs. 
Motivated by the observation that dimension-wise class variance reflects discriminability, we introduce \textbf{Variance-adaptive Prototype Perturbation (VPP)}, which allocates less noise to discriminative subspaces, preserving semantic separability while ensuring privacy.
We further develop \textbf{Distillation-guided Clipping Regularization (DCR)}, which enables feature norms to adaptively concentrate near the predefined clipping threshold while maintaining prediction consistency.
Theoretical analysis shows that our groupwise mechanism provides privacy guarantees no weaker than the isotropic baseline under the same privacy constraints.
Extensive experiments on multi-domain benchmarks demonstrate that VPDR achieves a superior privacy-utility trade-off, outperforming IGPP in personalized federated fine-tuning without sacrificing robustness against realistic attacks.  
\end{abstract}

\begin{figure}[t]
  \centering
  \begin{subfigure}[b]{1\linewidth}
    \includegraphics[width=\linewidth]{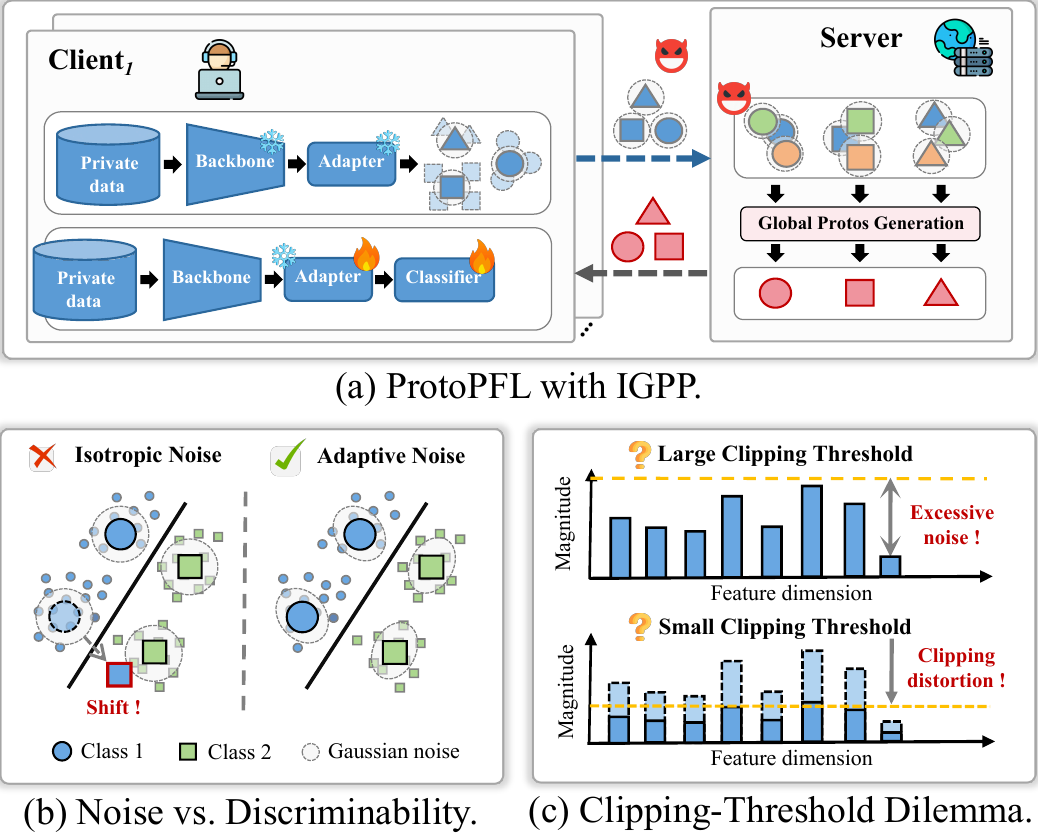}
  \end{subfigure} 
  \vspace{-6mm}
  \caption{\textbf{Motivation illustration}. In (b), the dashed circle shows uniform noise and the dashed ellipse our adaptive noise. Isotropic noise shifts the blue Class 1 prototype into the red Class 2 region. In (c), a generous clipping bound leaves feature norms almost unchanged but forces a large noise scale, while an aggressive bound shrinks many features, causing severe information loss.}
  \label{fig:issues}
\end{figure}

\section{Introduction}
Scaling laws for foundation models have been established on massive public datasets~\cite{alabdulmohsin2022revisiting}.
As general pre-training has largely consumed available public data~\cite{villalobos2024position}, leveraging private data becomes essential to overcome the emerging data-scarcity bottleneck~\cite{muennighoff2023scaling}.
Federated Learning (FL)~\cite{mcmahan2017communication,bai2024diprompt,lee2024fedsol,shi2025fedawa} offers a distributed paradigm for collaboratively fine-tuning pre-trained models without sharing raw data~\cite{Ye2024OpenFedLLMTL,Zheng2024SafelyLW}. Under domain skew, however, a single global model often underperforms or even fails to converge across clients~\cite{Wan2024FederatedGL,chen2024fair,wang2025federated}. 
To solve this, Personalized FL (PFL)~\cite{t2020pfedme,fallah2020personalized,ditto2021,xie2024perada} adapts shared models with client-specific components~\cite{t2020pfedme,ma2022,zhang2023fedala,tamirisa2024fedselect,yang2024fedas}.
In PFL, prototype-based methods~\cite{FedProto2022, fedpcl2022, fplhuang2023, fedplvm2024,fedktl2024,fedtgp2024} exchange compact class statistics such as class means or centroids to align global and local semantics, enabling lightweight personalization while maintaining frozen backbones.  

Despite their effectiveness, directly sharing prototypes poses significant privacy risks~\cite{wang2024fedpclcdr}.
Since prototypes condense features into domain fingerprints, they can closely resemble individual instances under small sample sizes, opening the door to membership inference and reconstruction attacks~\cite{park2024fedhide,darwish2025fedp3e,zhang2025mpft}. 
A standard defense is to enforce Local Differential Privacy (LDP)~\cite{wei2021user}.
In practice, clients first perform per-example $\ell_2$ clipping before prototype computation to bound sensitivity, then add calibrated isotropic Gaussian noise when transmitting prototypes, a recipe we call Isotropic Gaussian Prototype Perturbation (IGPP).
However, as illustrated in Figure~\ref{fig:issues}, IGPP suffers from two core drawbacks.
\blackcircnum{1} \textbf{Noise and discriminability mismatch.} Feature dimensions contribute unevenly to classification, with some carrying critical discriminability while others are largely redundant. Isotropic noise blindly over-perturbs the most informative dimensions and degrades class separability.
\blackcircnum{2} \textbf{$\ell_2$ clipping-threshold dilemma.} Prototype scales vary  based on class size and domain characteristics. A larger threshold reduces clipping distortion but demands massive noise injection, whereas a smaller threshold forces excessive shrinkage, irrevocably erasing semantic content.

Driven by these challenges, we introduce two modules that jointly account for prototype geometry and privacy.
\textbf{First}, to address \blackcircnum{1}, we derive dimension-wise discriminative scores from the intra- and inter-class variability of sample embeddings and use them to privately select a discriminative subspace. We then perform groupwise clipping on each embedding, construct prototypes from the clipped representations, and inject group-specific noise to reallocate the privacy budget toward task-relevant coordinates for better representation quality.
\textbf{Second}, to tackle \blackcircnum{2}, we append a differentiable soft-clipping layer to the local encoder and enforce prediction consistency between pre- and post-clipped features via knowledge distillation. This guidance breaks the norm–weight compensation shortcut and drives feature norms to concentrate near the clipping threshold, reducing information loss under the same privacy budget. 
 
In this paper, we propose a client-side privacy plug-in named \textbf{VPDR}, which comprises \textbf{V}ariance-adaptive \textbf{P}rototype Perturbation (VPP) and \textbf{D}istillation-guided Clipping \textbf{R}egularization (DCR). We leverage VPP to steer massive noise away from discriminative dimensions, preserving task-relevant structure while avoiding over-perturbation. We also employ DCR, designing a consistency regularizer that mitigates clipping-induced artifacts while keeping the baseline task loss unchanged. 
This synergy enables ProtoPFL frameworks equipped with VPDR to achieve more robust personalization than IGPP-based baselines.
Formal analysis and systematic evaluations under two privacy attacks further show that VPDR provides LDP guarantees no weaker than those of IGPP while achieving a superior privacy–utility trade-off.
Our main contributions are:
\begin{itemize}[itemsep=1pt, topsep=2pt, leftmargin=10pt]
  \item We fuse intra- and inter-class variance to quantify per-dimension discriminability, yielding, to our knowledge, the first dimension-wise adaptive protection mechanism for prototype release with LDP guarantees. 
  \item We design a distillation‑guided regularization that couples soft clipping with distillation consistency, suppressing clipping artifacts while enhancing both robustness and representation fidelity.
  \item Integrated into existing ProtoPFLs, VPDR yields consistent gains on three multi-domain benchmarks, with ablations and privacy attacks supporting the effectiveness of both VPP and DCR.
  Code is available at \url{https://github.com/yuCoryx/ProtoPFL_VPDR}.
\end{itemize}

\section{Related Work} 
\noindent\textbf{Differential Privacy in FL.}
A large body of work studies DP for gradients or model updates in standard FL~\cite{geyer2017differentially,Cheng2022fedblurs, bietti2022ppsgd, Shi2023fedsam,yang2023feddpa}. DP-FedAvg~\cite{geyer2017differentially} applies server-side Gaussian noise to achieve client-level central DP, while BLUS+LUS~\cite{Cheng2022fedblurs} and DP-FedSAM~\cite{Shi2023fedsam} reduce the accuracy degradation via gradient sparsification and flatness-aware training. Extensions to PFL include PPSGD~\cite{bietti2022ppsgd} and FedDPA~\cite{yang2023feddpa} adaptively mix or regularize personalized parameters under joint DP guarantees. 
To guard against untrusted servers, LDP mechanisms have also been introduced~\cite{wei2021user,he2023clustered,wang2025fedfr,cui2025aldp,tran2025privacy}. UDP-FL~\cite{wei2021user} perturbs each user’s update with calibrated Gaussian noise, and later schemes such as ACS-FL~\cite{he2023clustered}, FedFR-ADP~\cite{wang2025fedfr}, and ALDP-FL~\cite{cui2025aldp} refine this idea with adaptive clipping, noise scaling, and budget control, while all providing LDP guarantees for each client.

\noindent\textbf{Prototype-based PFL.}
ProtoPFL uses class prototypes to align feature spaces across clients. FedProto~\cite{FedProto2022} mitigates heterogeneity by exchanging prototypes, and FedPCL~\cite{fedpcl2022} improves semantic consistency through prototype-level contrastive learning. FPL~\cite{fplhuang2023}, FedGMKD~\cite{zhang2024fedgmkd}, and FedPLVM~\cite{fedplvm2024} integrate clustering and sparsity regularization to enhance robustness and generalization. More recent methods, including FedTGP~\cite{fedtgp2024}, FedKTL~\cite{fedktl2024}, and MPFT~\cite{zhang2025mpft}, leverage server-side training and knowledge transfer to reinforce global semantics. Together, these advances shift ProtoPFL from static prototype sharing toward dynamic semantic alignment, improving personalization across clients. 
Despite growing interest in privacy for prototype release, existing defenses~\cite{fedpcl2022,fedplvm2024,zhang2025mpft} often inject isotropic Gaussian noise into prototypes—typically without explicit per-example clipping or a full local DP accounting. 
As a result, \textit{they lack formal DP guarantees and overlook task-relevant prototype geometry, thereby degrading representation quality}.  We present the first LDP mechanism for ProtoPFL that combines groupwise clipping with adaptive noise allocation, offering privacy guarantees no weaker than isotropic perturbation and a better privacy–utility trade-off.

\begin{figure*}[t]
 \centering
 \includegraphics[width=1.0\linewidth]{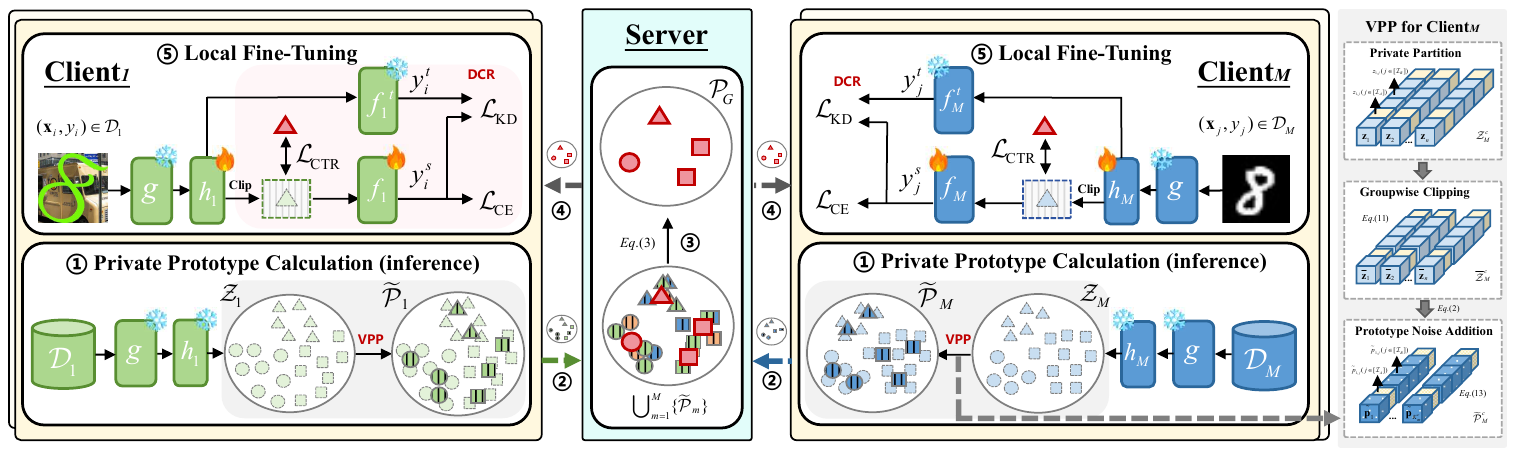} 
 \vspace{-4mm}
 \caption{\textbf{Architecture illustration} of the ProtoPFL with VPDR.  
\ding{172} Private Prototype Calculation: each client runs VPP (Section~\ref{sec:VPP}) to privately partition embeddings, apply groupwise clipping, and add adaptive noise to prototypes;
\ding{173} Upload: clients send privatized prototypes;  
\ding{174} Global Prototype Generation: the server aggregates or trains global prototypes;
\ding{175} Download: clients receive global prototypes;  
\ding{176} Local Fine-Tuning: each client fine-tunes locally with DCR (Section~\ref{sec:DCR}) for feature-norm stabilization alongside the task loss.}
 \label{fig:VPDR Overview}
\end{figure*}

\section{Preliminaries} 
\label{sec:preliminaries}
\subsection{Local Differential Privacy} 
\label{subsec:LDP}
We adopt client-side Local Differential Privacy (LDP) on each client's dataset~\cite{wang2019local,wei2021user,tran2025privacy} in this work, which protects individual training examples from an untrusted server. 

\begin{definition}[\((\epsilon, \delta)\)–LDP~\cite{wei2021user,tran2025privacy}]
A randomized mechanism $\mathcal{M}:\mathcal{X}\to\mathcal{R}$ satisfies $(\epsilon,\delta)$–LDP if for any adjacent $\mathcal{D}$ and $\mathcal{D}'$ differing by one sample and measurable $\mathcal{S}\subseteq\mathcal{R}$,
\(\Pr[\mathcal{M}(\mathcal{D})\in\mathcal{S}] \le e^{\epsilon}\Pr[\mathcal{M}(\mathcal{D}')\in\mathcal{S}] + \delta\),
where $\epsilon$ bounds the privacy loss and $\delta\in[0,1)$ allows a failure probability.
\end{definition} 

\begin{definition}[Sensitivity~\cite{wei2021user}]
Given a function \( f \), the \(\ell_p\) sensitivity of \( f \) is defined as
\(\Delta_f = \max_{\mathcal{D}\sim\mathcal{D}'} \| f(\mathcal{D}) - f(\mathcal{D}') \|_p\),
where $\|\cdot\|_p$ is the $\ell_1$ or $\ell_2$ norm.
\end{definition} 

\begin{definition}[Gaussian mechanism~\cite{wei2021user}]
For $\ell_2$ sensitivity $\Delta_f$, the Gaussian mechanism releases
\(\mathcal{M}(\mathcal{D})=f(\mathcal{D})+\mathcal{N}\!\bigl(0,(\sigma\Delta_f)^2\mathbf I_d\bigr),\) 
which satisfies $(\epsilon,\delta)$--LDP for a suitable noise multiplier $\sigma>0$.
\end{definition}
  
\begin{theorem}[Noise calibration~\cite{wei2021user}]
\label{thm:wei-noise}
In $T$-round FL, suppose each client $m$ releases
\(\widetilde{s}_m = s_m + \mathcal{N}\bigl(0,(\sigma\Delta_s)^2\mathbf{I}_d\bigr)\), 
where $s_m=f(\mathcal D_m)$ has $\ell_2$ sensitivity $\Delta_s$. Then there exists a constant $c_1>0$ such that \(\sigma \ge c_1\sqrt{T\ln(1/\delta)}/\epsilon\) ensures $(\epsilon,\delta)$–LDP over $T$ rounds for each client $m$.
\end{theorem}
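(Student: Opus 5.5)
The plan is to treat each round's release as a Gaussian mechanism on a query of $\ell_2$ sensitivity $\Delta_s$, bound its per-round privacy loss, and compose over the $T$ rounds using the moments-accountant (equivalently, Rényi DP) argument of Abadi et al., which is the route taken in~\cite{wei2021user}. Normalize by setting $g=s_m/\Delta_s$, so $g$ has unit sensitivity and noise $\mathcal N(0,\sigma^2\mathbf I_d)$. For adjacent $\mathcal D_m,\mathcal D_m'$ the output distributions are $\mathcal N(\mu,\sigma^2\mathbf I)$ and $\mathcal N(\mu',\sigma^2\mathbf I)$ with $\|\mu-\mu'\|_2\le 1$. By rotation invariance, this reduces to a one-dimensional comparison of $\mathcal N(0,\sigma^2)$ and $\mathcal N(1,\sigma^2)$.

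\textbf{Per-round bound.} For each order $\lambda\ge 1$, compute the log moment generating function of the privacy loss random variable $L=\ln\frac{p(o)}{p'(o)}$. For Gaussians this is exact: $\alpha(\lambda)=\ln\mathbb E[e^{\lambda L}]=\lambda(\lambda+1)/(2\sigma^2)$. Equivalently, the mechanism is $(\lambda+1,\ (\lambda+1)/(2\sigma^2))$-RDP.

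\textbf{Composition.} The $T$ rounds are adaptive: round $t$'s query may depend on previously released outputs, through the global prototypes and local fine-tuning. Log-moments add under adaptive composition, so the total is $\alpha_T(\lambda)\le T\lambda(\lambda+1)/(2\sigma^2)$. The tail bound then gives, for every $\lambda$,
\[
\delta\le\exp\bigl(\alpha_T(\lambda)-\lambda\epsilon\bigr).
\]
It therefore suffices to find $\lambda$ with
\[
\frac{T\lambda(\lambda+1)}{2\sigma^2}\le\frac{\lambda\epsilon}{2}
\quad\text{and}\quad
\frac{\lambda\epsilon}{2}\ge\ln(1/\delta).
\]

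\textbf{Choosing $\lambda$ and the constant.} Take $\lambda=\lceil 2\ln(1/\delta)/\epsilon\rceil$, which meets the second condition. The first condition becomes $\sigma^2\ge T(\lambda+1)/\epsilon$. Assuming, as is standard, $\epsilon\le c\,T$ (or $\epsilon\lesssim 1$ per composition) and $\delta<1/e$, we have $\lambda+1\le c'\ln(1/\delta)/\epsilon$. This yields $\sigma\ge c_1\sqrt{T\ln(1/\delta)}/\epsilon$ for an absolute constant $c_1$.

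\textbf{Main obstacle.} The delicate step is tracking the constant and the regime restrictions. The bound $\lambda+1=O(\ln(1/\delta)/\epsilon)$ needs $\epsilon$ not too large relative to $\ln(1/\delta)$. If that fails, I would fall back to advanced composition plus the single-shot Gaussian bound $\sigma\ge\sqrt{2\ln(1.25/\delta')}/\epsilon'$, applied with $\epsilon'=\epsilon/(2\sqrt{2T\ln(2/\delta)})$. This gives the same $\sqrt{T\ln(1/\delta)}/\epsilon$ scaling up to a logarithmic factor, which can be absorbed into $c_1$ only under the stated regime.

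A second point to check is that the bound is per client: only $\mathcal D_m$ varies between adjacent datasets, and other clients' releases depend on $\mathcal D_m$ only through post-processing of $\widetilde s_m$. The guarantee therefore holds for each client $m$ separately, by post-processing invariance.
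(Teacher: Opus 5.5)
Your argument is correct and follows the moments-accountant/R\'enyi route of the source the paper cites for this result. The paper gives no proof of its own, but its appendix uses exactly your ingredients: the Gaussian R\'enyi bound of Corollary~\ref{cor:isotropic-covariance}, additive composition over the $T$ rounds, and the conversion of Lemma~\ref{lem:rdptodp}, which is your tail bound with $\alpha=\lambda+1$.

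The one correction concerns the regime. Your step $\lambda+1\le c'\ln(1/\delta)/\epsilon$ needs $\epsilon\le C\ln(1/\delta)$. Your alternative $\epsilon\lesssim 1$, $\delta<1/e$ gives this, but $\epsilon\le cT$ does not. The restriction is genuinely necessary: at $\sigma=c_1\sqrt{T\ln(1/\delta)}/\epsilon$ the $T$-round privacy loss has mean $T/(2\sigma^2)=\epsilon^2/(2c_1^2\ln(1/\delta))$, which is $\gg\epsilon$ once $\epsilon\gg c_1^2\ln(1/\delta)$, and this drives the failure probability to $1$. So the statement as written fails in that regime, and your advanced-composition fallback cannot rescue it; its extra logarithmic factor could not be absorbed into an absolute constant anyway.
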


\subsection{Oneshot Laplace Mechanism}\label{app:sec-OneshotLaplace}
The oneshot Laplace mechanism selects Top-$k$ coordinates by adding Laplace noise once and ranking the perturbed scores, ensuring DP without iterative peeling~\cite{Qiao2021OneshotTopK}. 

\begin{definition}[Oneshot Laplace for Top-$k$~\cite{Qiao2021OneshotTopK}]\label{def:oneshot-laplace}
Given queries \(f_1,\ldots,f_d\) on the dataset \( \mathcal{D} \) with common \(\ell_1\)-sensitivity \(\Delta_f\), draw \(b_i\!\sim\!\mathrm{Lap}(\lambda)\) and set \(y_i=f_i(\mathcal D)+b_i\). The mechanism then outputs the indices of the \( k \) largest \( y_i \)'s. 
\end{definition}

\begin{theorem}[Oneshot Laplace Privacy~\cite{Qiao2021OneshotTopK}]\label{thm:pure-dp}
If $\lambda \ge {2k\,\Delta_f}/{\epsilon}$, then the oneshot Laplace mechanism is $(\epsilon,0)$–DP.
\end{theorem}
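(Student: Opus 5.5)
We prove the statement directly: bound the privacy loss of the whole output set pointwise, and do not try to compose $k$ separate selections. Fix adjacent $\mathcal D,\mathcal D'$ and write $f_i=f_i(\mathcal D)$, $f_i'=f_i(\mathcal D')$. Because the queries share $\ell_1$-sensitivity $\Delta_f$, we have $\sum_i |f_i-f_i'|\le\Delta_f$, and in particular $|f_i-f_i'|\le\Delta_f$ for every $i$. Let $S$ be any $k$-subset of indices (ties occur with probability zero). The mechanism outputs $S$ exactly when
\[
\min_{j\in S}(f_j+b_j)\;>\;\max_{l\notin S}(f_l+b_l).
\]
It therefore suffices to show $\Pr[S\mid\mathcal D]\le e^{\epsilon}\Pr[S\mid\mathcal D']$ for every $S$. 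Summing this over the $k$-subsets in any measurable set of outcomes then gives $(\epsilon,0)$-DP.

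\textbf{Step 1 (coupling).} We build a shift $b\mapsto b'$ of the noise vector such that, whenever $b$ produces $S$ on $\mathcal D$, $b'$ produces $S$ on $\mathcal D'$. Set $b_j'=b_j+(f_j-f_j')+\Delta_f$ for $j\in S$. Set $b_l'=b_l+(f_l-f_l')-\Delta_f$ for $l\notin S$, or more simply $b_l'=b_l$, compensating with a $2\Delta_f$ shift on $S$.

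\textbf{Step 2 (event inclusion).} Under $\mathcal D'$, the values $f_j'+b_j'$ for $j\in S$ are at least the original $f_j+b_j$ values. The values $f_l'+b_l'$ for $l\notin S$ are at most $f_l+\Delta_f+b_l$. To keep the ordering we therefore use $b_j'=b_j+f_j-f_j'+\Delta_f$ on $S$ and $b_l'=b_l$ off $S$. Then $f_j'+b_j'=f_j+b_j+\Delta_f$ and $f_l'+b_l'\le f_l+b_l+\Delta_f$, so the event is preserved. Only the $k$ coordinates in $S$ are shifted.

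\textbf{Step 3 (density ratio).} The Laplace density ratio for a coordinate shifted by $t$ is at most $e^{|t|/\lambda}$. Each shift on $S$ satisfies $|f_j-f_j'+\Delta_f|\le 2\Delta_f$, so the total shift is at most $2k\Delta_f$. Since the change of variables is a translation, its Jacobian is $1$. Integrating over $b$ gives
\[
\Pr[S\mid\mathcal D]\le e^{2k\Delta_f/\lambda}\Pr[S\mid\mathcal D']\le e^{\epsilon}\Pr[S\mid\mathcal D'],
\]
using $\lambda\ge 2k\Delta_f/\epsilon$.

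The main obstacle is Step 2. The shift has to keep every in-set/out-of-set comparison in the right order simultaneously while touching only $k$ coordinates; otherwise the cost would scale with $d$ instead of $k$. The bound $|f_l-f_l'|\le\Delta_f$ on the unshifted coordinates is what makes shifting only $S$ enough.
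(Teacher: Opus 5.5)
The paper does not prove this statement. It imports it from \cite{Qiao2021OneshotTopK} and uses it as a black box, with $\Delta_f=H$ and per-round budget $\epsilon_1/T$, in the proof of Theorem~\ref{thm:vpp-partition-pure}. Your coupling argument is a correct, self-contained proof. The translation $t_j=f_j-f_j'+\Delta_f$ for $j\in S$ and $t_l=0$ for $l\notin S$ raises every in-set score by exactly $\Delta_f$ and every out-of-set score by at most $\Delta_f$. So the noise event producing $S$ on $\mathcal D$, translated by $t$, lies inside the event producing $S$ on $\mathcal D'$. Since $0\le t_j\le 2\Delta_f$, the density ratio is at most $e^{\|t\|_1/\lambda}\le e^{2k\Delta_f/\lambda}$, and the finite output space makes the per-outcome bound sufficient. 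Compared with the bare citation, your route buys two things. Because the score shift on $S$ is uniform, it preserves the order inside $S$, so the same bound holds if the ordered top-$k$ list is released. It also shows that the factor $2k$ comes from moving only $k$ noise coordinates, each by at most $2\Delta_f$.

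There are two things to tidy.

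First, drop the opening claim $\sum_i|f_i-f_i'|\le\Delta_f$ and state the hypothesis as the per-coordinate bound $|f_i(\mathcal D)-f_i(\mathcal D')|\le\Delta_f$, which is all you use. Under the vector-$\ell_1$ reading the theorem is trivial: adding independent $\mathrm{Lap}(\lambda)$ noise to every coordinate and releasing the whole noisy vector is already $\epsilon$-DP once $\lambda\ge\Delta_f/\epsilon$, and top-$k$ is post-processing, so no factor $2k$ would be needed. More importantly, that reading is not what the paper can supply. Clipping $\overline S_j$ to $[0,H]$ bounds each coordinate's change by $H$, but the score vector may move by up to $dH$ in $\ell_1$. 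The per-coordinate version you actually prove is exactly what the paper's application needs.

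Second, Step~1 lists couplings you then abandon. Shifting the out-of-set noise as well would cost up to $2d\Delta_f$. A constant noise shift of $2\Delta_f$ on $S$ works for the unordered set but does not directly give the ordered-list version, since it shifts in-set scores by different amounts. State the Step~2 coupling directly.
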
  

\section{Methodology}
\label{sec:method}
\subsection{Problem Statement and Motivation}   
\noindent \textbf{Personalized Federated Fine-Tuning.}
A federated system has \(M\) clients, each client \(m \in [M]\) with dataset \(\mathcal{D}_m\) of size $n_m$ drawn from a domain-shifted distribution \(P_m(\mathbf{x}\mid y)\). PFL seeks a client-specific model \(F_m\) that minimizes:
\begin{align}
    \min\nolimits_{F_m} \mathbb{E}_{(\mathbf x, y) \sim \mathcal{D}_m} \mathcal{L}\big(F_m(\mathbf x), y\big),
\end{align}
where \(\mathcal{L}(\cdot,\cdot)\) denotes the local loss. We write $F_m=f_m \circ h_m$, with a frozen backbone \(g\) and a trainable adapter \(A_m\) in feature extractor \(h_m\), followed by a classifier \(f_m\).

\vspace{0.5mm}
\noindent \textbf{Prototype-based Personalization.}
Each client maps private samples to \(d\)-dimensional embeddings via \(h_m\), yielding class-wise sets
\(\mathcal{Z}_m^c=\{\mathbf{z}_i^c=h_m(\mathbf{x}_i^c)\mid \mathbf{x}_i^c\in\mathcal{D}_m^c\}\).
For class \(c\), client \(m\) calculates a class-wise prototype set:
\begin{align}\label{eq:local-proto}
\{\mathbf{p}_m^{c,k}\}
\leftarrow \mathrm{Calculate}\bigl(\{\mathbf{z}_i^c \}\bigr),\;
\mathcal{P}_m^{c}=\big\{\mathbf{p}_m^{c,k}\big\}_{k=1}^{K_m^c},
\end{align}
where \(K_m^c\) is the number of local prototypes for class \(c\) on client \(m\). \(K_m^c=1\) recovers a single class mean~\cite{FedProto2022,fedpcl2022,fplhuang2023,fedtgp2024}, while \(K_m^c>1\) yields multiple local centroids~\cite{fedplvm2024,zhang2025mpft,zhang2024fedgmkd}.
The server aggregates $\{\mathcal{P}_m^c\}_{m=1}^M$ and generates:
\begin{align} \label{eq:glocal-proto}
\{\mathbf{p}_g^{c,k}\}
&\leftarrow \mathrm{Generate}\Bigl(\cup_{m=1}^M\mathcal{P}_m^c\Bigr),\;
\mathcal{P}_g^{c}=\big\{\mathbf{p}_g^{c,k}\big\}_{k=1}^{K_g^c},
\end{align}
where \(K_g^c\) is the number of global prototypes per class. \(K_g^c=1\) denotes a global prototype obtained either by averaging~\cite{FedProto2022,fedpcl2022} or by training from client prototypes~\cite{fedtgp2024}, and \(K_g^c>1\) corresponds to clustering into multiple global centroids~\cite{fedplvm2024,fplhuang2023}.
The global set \(\mathcal{P}_g=\bigl\{\mathcal{P}_g^{c}\bigr\}\) is then broadcast and used as global anchors in existing contrastive or classification objectives for multi-domain alignment.

\vspace{0.5mm}
\noindent \textbf{Isotropic Gaussian Prototype Perturbation (IGPP).} To achieve client-side LDP, each client (i) applies per-example $\ell_2$ clipping before any averaging or clustering,
\begin{align}
\overline{\mathbf z}_i^c
= \mathbf z_i^c\cdot \min\!\Bigl(1,\; R/\|\mathbf z_i^c\|_2\Bigr),
\end{align}
(ii) compute prototypes from $\{\overline{\mathbf z}_i^c\}$, and (iii) privatize each prototype release by adding isotropic Gaussian noise:
\begin{align}
\widetilde{\mathbf p}
=\mathbf p + \boldsymbol{\xi}, \;
\boldsymbol{\xi}\!\sim\!\mathcal N\!\bigl(\mathbf 0,(\sigma_{\mathrm{iso}}\Delta)^2\mathbf I_{d}\bigr)
\end{align}
where $\Delta$ equals $2R/n_m^c$ for class means and $2R/n_m^{c,k}$ for clustering of size $n_m^{c,k}$ within class $c$. The noise multiplier is calibrated to $(\epsilon,\delta)$–LDP over $T$ rounds via 
\begin{equation}
\sigma_{\mathrm{iso}} \geq c_1 {\sqrt{T \ln(1/\delta)}}/{\epsilon}
\quad\text{(Theorem~\ref{thm:wei-noise})}.    
\end{equation}

However, IGPP leaves two practical issues. 
\textbf{First}, applying isotropic noise perturbs all coordinates equally and can over-distort the most discriminative directions. 
\textbf{Second}, choosing a fixed $\ell_2$ clipping threshold is delicate.
A large threshold inflates noise via \(\Delta\) while a small one severely distorts feature semantics. 
To address both, we propose \textbf{VPDR}, a client-side plug-in for ProtoPFL. 
As sketched in Figure~\ref{fig:VPDR Overview}, our \textbf{V}ariance‐adaptive \textbf{P}rototype \textbf{P}erturbation (\textbf{VPP}) redistributes noise across dimensions to minimize information loss while preserving the same LDP guarantee, and our \textbf{D}istillation-guided \textbf{C}lipping \textbf{R}egularization (\textbf{DCR}) on each local adaptation stabilizes the per-sample norm bound.
A summary of notation is provided in Appendix~\ref{app:notaions}.

\subsection{Variance-Adaptive Prototype Perturbation}
\label{sec:VPP}
Isotropic noise overlooks the fact that feature dimensions matter differently for recognition: perturbing more informative dimensions harms utility more than redundant ones.
Intuitively, dimensions with low intra-class variance often encode stable class-specific semantics, whereas those with high inter-class variance are more effective at separating classes.
This motivates identifying an informative subspace and assigning it less perturbation.
The main challenge, however, is that selecting such a subspace is itself data-dependent and therefore privacy-sensitive.
We then split the budget over $T$ communication rounds via sequential composition: $(\epsilon, \delta) = (\epsilon_1, 0) + (\epsilon_2, \delta)$. Specifically, we allocate $\epsilon_1 = r\epsilon$ with $r \in (0, 1)$ to subspace selection and reserve $\epsilon_2 = (1-r)\epsilon$ for prototype release (allocation diagram in Appendix~\ref{app:Privacy Allocation}). 
This yields VPP, which adaptively reallocates perturbation while preserving the overall privacy guarantee.

\vspace{0.5mm}
\noindent \textbf{Discriminative Partition.}
For each $j\in[d]$, we define
\begin{align}\label{eq:v}
V^{\mathrm{tra}}_{j} {=} \sum_c (n_m^c-1) s_{c,j}^2,
\,  
V^{\mathrm{ter}}_{j} {=} \sum_c n_m^c(\mu_{c,j}-\mu_j)^2,
\end{align}
where the intra-class variance $s_{c,j}^2$, the class mean $\mu_{c,j}$ and the client-level mean $\mu_j$ are computed from encodings $\mathcal Z_m$.  
The ANOVA-normalized~\cite{scheffe1999analysis} discriminative score is: 
\begin{align}
\label{eq:anova-f}
S_j =\frac{V^{\mathrm{ter}}_{j}/(C-1)}{V^{\mathrm{tra}}_{j}/(n_m-C)+\zeta},
\; \zeta>0.
\end{align}
This normalization mitigates the effects of class-size imbalance and small-sample bias.  
\textit{Empirically}, we find that $S_j$ exhibits a strong linear correlation with the label mutual information $I(z_j;y)$ (Figure~\ref{fig:vpp_scatter}), with both Pearson and Spearman coefficients consistently exceeding $0.90$ across domains (Figure~\ref{fig:vpp_correlation}). This makes $S_j$ a reliable and lightweight proxy for dimension-wise discriminability.

\begin{figure}[t]
  \centering
  \begin{subfigure}[b]{0.495\linewidth}
    \includegraphics[width=\linewidth]{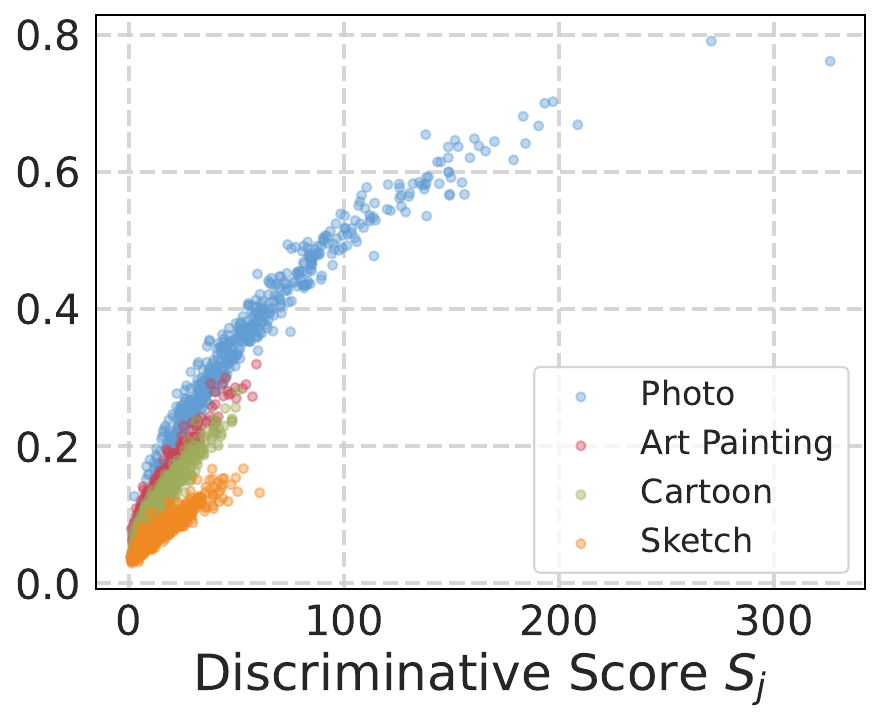}
    \caption{$S_j$ vs.\ $I(z_j;y)$ scatter.} 
    \label{fig:vpp_scatter}
  \end{subfigure}\hfill 
  \begin{subfigure}[b]{0.495\linewidth}
    \includegraphics[width=\linewidth]{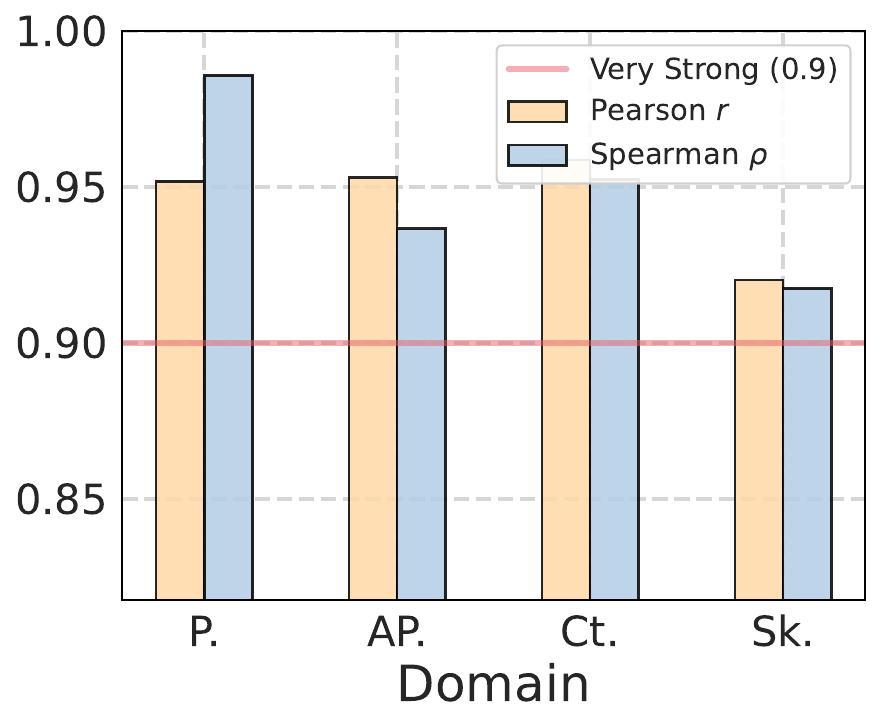}
    \caption{Per-domain correlations.}
    \label{fig:vpp_correlation}
  \end{subfigure}
  \vspace{-2mm}
  \caption{\textbf{Correlation} between the discriminative score $S_j$ and label mutual information $I(z_j;y)$ on PACS.} 
  \label{fig:vpp_motivation}
\end{figure} 

Although never uploaded, the selected index set determines the groupwise clipping bounds and noise covariance.
Thus, \textit{the released prototypes remain a function of this private selection} and must be included in privacy accounting.
To privately isolate informative dimensions, we clip raw scores to bound the $\ell_1$ sensitivity, $\overline{S}_j = \mathrm{clip}(S_j,0,H)
:= \min\{\max(S_j,0),\,H\}$, and apply the oneshot Laplace Top-$k$ mechanism~\cite{Qiao2021OneshotTopK}. Setting $k = d_A = \lceil\rho d\rceil$ for $\rho \in (0, 0.5]$ isolates the discriminative subspace $\mathcal{I}_A$, leaving the rest $d_B=d-d_A$ as the non-discriminative subspace $\mathcal{I}_B$. 
Calibrating the per-round Laplace noise scale as 
\begin{align}
\lambda \ge {2 d_A HT}/{\epsilon_1} 
\end{align}
guarantees $(\epsilon_1,0)$-LDP for the partition step over $T$ rounds as established by Theorem~\ref{thm:vpp-partition-pure}.

\vspace{0.5mm}
\noindent \textbf{Privatized Prototype Release.}
Given this partition, we decompose the feature vector as $\mathbf{z} = [\mathbf{z}_A; \mathbf{z}_B]$, where $\mathbf{z}_A = \mathbf{z}[\mathcal{I}_A]$ and $\mathbf{z}_B = \mathbf{z}[\mathcal{I}_B]$. For a fixed global clipping threshold $R$, we allocate groupwise bounds:
\begin{align}
R_A = R\kappa_A, R_B=R\kappa_B,
\end{align}
where $\kappa_A=\sqrt{{d_A}/{d}}, \kappa_B=\sqrt{{d_B}/{d}}$, so that $R_A^2+R_B^2=R^2$. We then independently apply groupwise $\ell_2$ clipping: 
\begin{align}\label{eq:group-clip}
&\overline{\mathbf z}_A = \mathbf z_A\cdot\min\bigl(1,R_A/\|\mathbf z_A\|_2\bigr), \nonumber \\
&\overline{\mathbf z}_B = \mathbf z_B\cdot\min\bigl(1,R_B/\|\mathbf z_B\|_2\bigr),
\end{align}
and form prototypes $\mathbf{p}=[\mathbf{p}_A;\mathbf{p}_B]$ by computing the mean or clustering on each group. This yields group sensitivities $\Delta_A=2R_A/n_m^c=\Delta\kappa_A$ and $\Delta_B=2R_B/n_m^c=\Delta\kappa_B$.  

Let $\sigma_{\mathrm{ref}}$ denote the noise multiplier of the reference isotropic Gaussian mechanism calibrated to $(\epsilon_2,\delta)$.
Theorem~\ref{thm:vpp-Prototype-Release} establishes that groupwise reallocation can be performed without weakening the privacy guarantee, provided that
$1/\sigma_A^2+1/\sigma_B^2\le 1/\sigma_{\mathrm{ref}}^2$.
We satisfy it by introducing weights $w_A, w_B > 0$ with $w_A+w_B=1$ and parameterizing the groupwise multipliers as 
\begin{equation}
\sigma_A={\sigma_{\mathrm{ref}}}/{\sqrt{w_A}},\,\sigma_B={\sigma_{\mathrm{ref}}}/{\sqrt{w_B}}.    
\end{equation}
For a parameter-free design, we set these weights solely based on the group dimensions:
\begin{align}\label{eq:weights}
w_A={\kappa_B}/{(\kappa_A+\kappa_B)},
w_B=1-w_A,    
\end{align} 

The privatized prototype release is then formulated as:
\begin{align}\label{eq:add-noise}
\widetilde{\mathbf p}
=\bigl[{\mathbf p}_A+\boldsymbol{\xi}_{A}&,\ {\mathbf p}_B+\boldsymbol{\xi}_{B}\bigr],
\end{align}  
\vspace{-5mm}
\begin{align}
\boldsymbol{\xi}_{A}\!\sim\!\mathcal N\bigl(\mathbf 0,(\sigma_A\Delta_A)^2\mathbf I_{d_A}\bigr), 
\boldsymbol{\xi}_{B}\!\sim\!\mathcal N\bigl(\mathbf 0,(\sigma_B\Delta_{B})^2\mathbf I_{d_B}\bigr). \nonumber
\end{align}
Under this construction, the release preserves the same $(\epsilon_2,\delta)$-LDP guarantee as the reference mechanism.

\vspace{0.5mm}
\noindent \textit{\textbf{Utility–privacy trade-off.}}
Under the sequential composition $(\epsilon,\delta)=(\epsilon_1,0)+(\epsilon_2,\delta)$,  reserving budget for subspace selection implies $\epsilon_2<\epsilon$, which increases the reference multiplier $\sigma_{\mathrm{ref}}$. VPP offsets this by actively steering noise away from task-relevant coordinates. Specifically, ensuring the informative subspace $\mathcal{I}_A$ is \textit{no noisier} than the isotropic reference requires $\sigma_A\Delta_A \le \sigma_{\mathrm{ref}}\Delta$. Substituting our weights~\eqref{eq:weights} and sensitivities, we obtain:
$$\kappa_A\sqrt{({\kappa_A+\kappa_B})/{\kappa_B}} \le 1
\;\Leftrightarrow\;
\kappa_A \le \kappa_B
\;\Leftrightarrow\;
0<\rho \le 0.5.$$
Consequently, restricting the selected discriminative dimensions to at most half of the feature space ensures that $\mathcal{I}_A$ receives no more noise than the isotropic $(\epsilon_2,\delta)$-LDP mechanism, while the excess noise is shifted to the redundant subspace $\mathcal{I}_B$.
As shown in Section~\ref{sec:Experiments}, this targeted preservation of discriminative features delivers a net utility gain that outweighs the overhead induced by privacy budget splitting.

\subsection{Distillation-Guided Clipping Regularization} 
\label{sec:DCR}
To ensure bounded sensitivity prior to adding calibrated Gaussian noise, clients typically apply per-example $\ell_2$ feature clipping before prototype construction. 
Hard clipping, however, is brittle: 
when a feature norm significantly exceeds the threshold ($\|\mathbf z_i\|_2 \gg R$), hard clipping aggressively shrinks the vector and irrevocably erases structural information. Conversely, for undersized features ($\|\mathbf z_i\|_2 \ll R$), the clipping operation is entirely ineffective, leaving small-magnitude vectors vulnerable to the injected noise.
DCR addresses this dilemma by regularizing the feature space dynamically during local training.

\noindent\textbf{Soft Clipping.} To mitigate sensitivity to $R$ in the prototype perturbation phase, we append a differentiable soft-clipping layer at the end of the feature encoder during local training:
\begin{align} \label{eq:soft clip}
\widehat {\mathbf z}_i
= \frac{R}{\|{\mathbf z}_i\|_2 + \gamma R}\cdot\,{\mathbf z}_i,
\quad 0 < \gamma \ll 1,    
\end{align} 
where \(\gamma\) controls the strength.  
This expands small-norm features toward $R$ and smoothly contracts large-norm features, avoiding the gradient discontinuities of hard clipping.

\noindent\textbf{Consistency Guidance.}
A potential failure mode is that the model counteracts soft clipping by inflating the classifier weights. As Figure~\ref{fig:norm} shows, pre-clipping feature norms tend to grow during mid-to-late training under soft clipping only.
To decouple this shortcut, we introduce an Exponential Moving Average (EMA) teacher–student distillation that enforces prediction consistency between pre- and post-clipped features without parameter coupling.  
Concretely, we split the local classifier into a trainable student head $f_m$ and a momentum teacher $f_m^{t}$ updated as
\begin{align}\label{eq:ema}
\theta_m^{t} = \beta {\theta}_m^{t} + (1-\beta) \theta_m,
\end{align}
where $\theta$ and ${\theta}_m^{t}$ denote the student and teacher parameters, respectively, and $\beta \in[0,1)$ is the EMA momentum. 
At each training step the teacher produces soft targets on the original feature, $y^t = f_m^t(\mathbf z)$, while the student predicts on the soft-clipped feature, $y^s = f_m(\overline{\mathbf z})$. We minimize the KL divergence between their temperature-scaled distributions: 
\begin{align} \label{eq:kd}
\mathcal{L}_{\text{KD}}=\mathrm{KL}\Bigl( \operatorname{softmax}\bigl(\frac{y^t}{\tau}\bigr) \big\|\operatorname{softmax}\bigl(\frac{y^s}{\tau}\bigr) \Bigr),
\end{align}
where $\tau$ is the distillation temperature. This consistency prevents the model from bypassing clipping via weight rescaling, effectively breaking the norm–weight tug-of-war.

Let $\mathcal L_{\text{BASE}}$ denote the ProtoPFL objective (e.g., cross-entropy for classification $\mathcal{L}_{\text{CE}}$ or an InfoNCE-style contrastive alignment $\mathcal{L}_{\text{CTR}}$ ~\cite{fedpcl2022,fedplvm2024,fplhuang2023,fedtgp2024}) as described in Figure~\ref{fig:VPDR Overview}.
The overall local fine-tuning objective is
\begin{align}\label{eq:overall-loss}
\mathcal{L}
= \mathcal{L}_{\text{BASE}}+\lambda_1 \mathcal{L}_{\mathrm{KD}}.
\end{align}
Here $\lambda_1$ is the KD weight. \textit{Empirically}, DCR consistently reduces norm drift and concentrates pre-clipping feature norms near $R$ during the mid-to-late training (Figure~\ref{fig:norm}). Also, it lowers the teacher–student logit discrepancy, providing evidence of stronger prediction consistency (Figure~\ref{fig:logit}). 
These behaviors make the framework less sensitive to the exact choice of $R$ while preserving task semantics.

\begin{figure}[t]
  \centering
  \begin{subfigure}[b]{0.5\linewidth}
    \includegraphics[width=\linewidth]{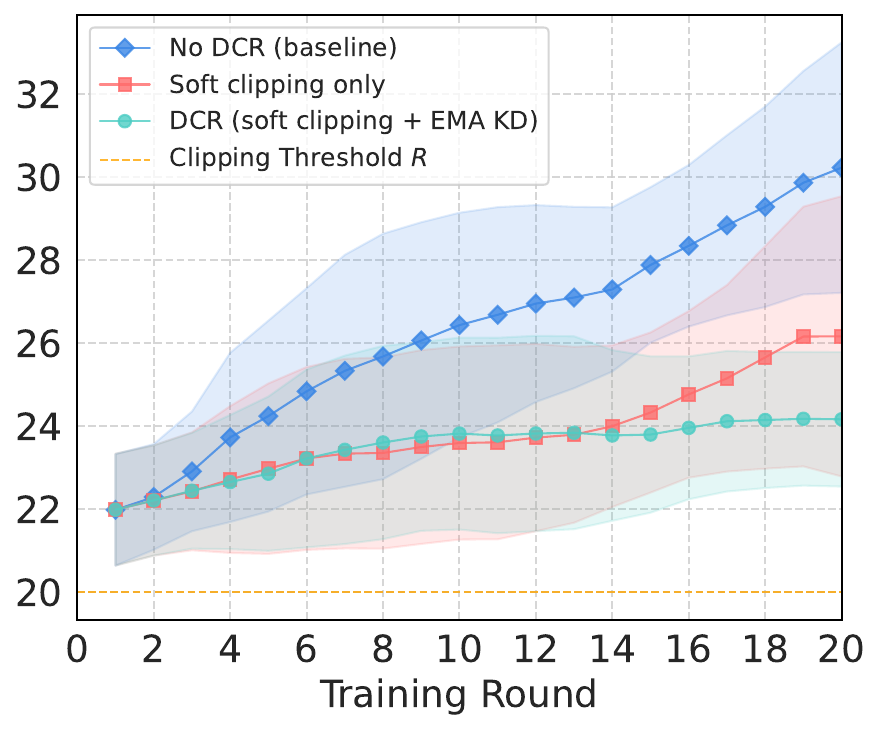}
    \caption{Pre-clipping feature norm.}
    \label{fig:norm}
  \end{subfigure}\hfill 
  \begin{subfigure}[b]{0.5\linewidth}
    \includegraphics[width=\linewidth]{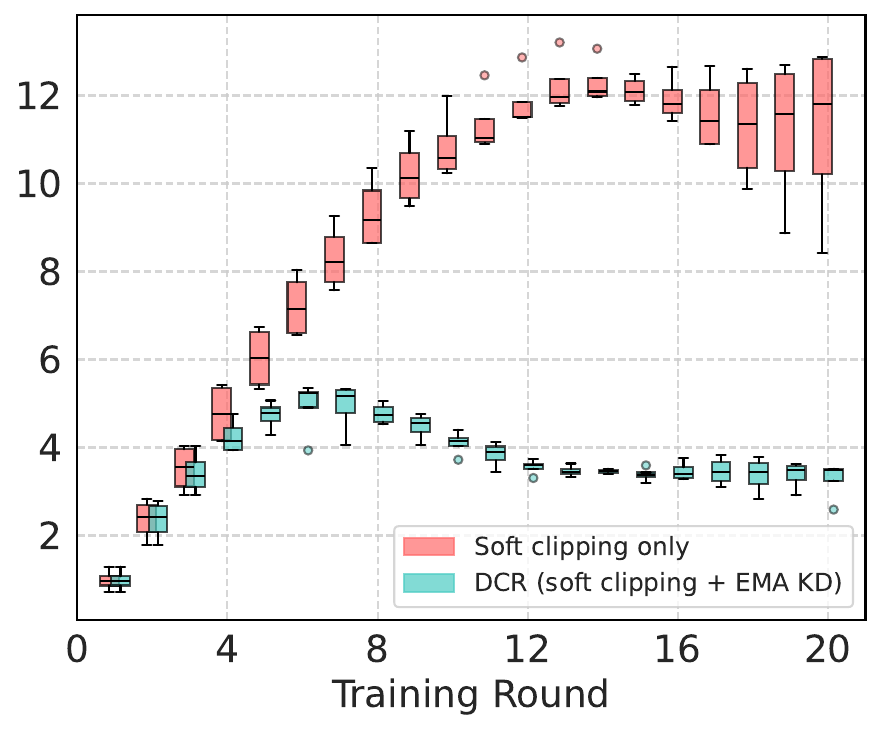}
    \caption{Teacher–student logit difference.}
    \label{fig:logit}
  \end{subfigure}
  
   \vspace{-2mm}
  \caption{\textbf{Evaluation of feature norm and logit difference} of FedPLVM~\cite{fedplvm2024} with IGPP on Office–Caltech.} 
  \label{fig:off_dcr}
\end{figure} 

\begin{algorithm}[t]
\caption{ProtoPFL with VPDR}
\label{alg:VPDR}
\textbf{Input}: Communication rounds $T$ , local epochs $E$, classes $C$ , clients $M$, private dataset $\mathcal D_m = \{{\mathbf x_i, y_i}\}_i^{n_m}$, global clipping threshold $R$, global noise multiplier $\sigma$\\
\textbf{Output}: Well-trained client models

\begin{algorithmic}[1]
\FOR{\(t = 1,2,\cdots, T\)} 
    \STATE \textcolor{MorandiBlue}{// Local Private Prototype Calculation} 
    \FOR{Client $m=1,2,\dots,M$}
        \STATE Encode features $z_i^c=h_m(\mathbf x_i^c)$
        \STATE Compute dimension scores $S$ via~\eqref{eq:v} and~\eqref{eq:anova-f};
        \STATE Privately partition to obtain $\mathcal I_A$ and $\mathcal I_B$;
        \STATE Perform groupwise clipping $\{\overline{z}_i^c\}$ by~\eqref{eq:group-clip};
       \FOR{Class $c=1,2,\dots,C$} 
            \STATE Get local prototypes and add noise by~\eqref{eq:add-noise}.
        \ENDFOR
        \STATE Upload the privatized prototype set $\widetilde{\mathcal{P}}_m$.
    \ENDFOR 
    \STATE \textcolor{MorandiBlue}{// Global Prototype Generation}
    \STATE Server aggregates or trains the global set $\mathcal{P}_g$ by~\eqref{eq:glocal-proto}.
    \STATE \textcolor{MorandiBlue}{// Local Personalized Fine-Tuning}
    \FOR{Client $m=1,2,\dots,M$}
        \FOR{\(e = 1,2,\cdots, E\)} 
            \STATE Apply soft clipping to features via~\eqref{eq:soft clip};
            \STATE Update the EMA teacher via~\eqref{eq:ema};
            \STATE Update adapter and classifier by Eq.~\eqref{eq:overall-loss}.
        \ENDFOR  
    \ENDFOR
\ENDFOR
\end{algorithmic}
\end{algorithm} 

\subsection{ProtoPFL with VPDR}
Algorithm~\ref{alg:VPDR} outlines ProtoPFL with VPDR.
In each round, clients first privately select a discriminative subspace and release privatized prototypes for uplink transmission. 
The server aggregates or trains these representations into global prototypes and broadcasts them back. With the global prototypes fixed, clients conduct local personalized fine-tuning via DCR while optimizing the base ProtoPFL objective. 
 
\vspace{0.5mm}
\noindent \textbf{Privacy Guarantees.}
VPDR confines all privacy-sensitive operations to the VPP during uplink transmission. Conversely, DCR is a purely local regularization that consumes no privacy budget. 
Under the budget split in Section~\ref{sec:VPP}, VPP is analyzed in two steps: Theorem~\ref{thm:vpp-partition-pure} provides $(\epsilon_1,0)$--LDP for private subspace selection, and Theorem~\ref{thm:vpp-Prototype-Release} provides $(\epsilon_2,\delta)$--LDP for prototype release.
By sequential composition, ProtoPFL with VPDR satisfies overall $(\epsilon,\delta)$--LDP. Full proofs are deferred to Appendix~\ref{app:More Proof of Privacy Analysis}.

\begin{restatable}[Partitioning privacy]{theorem}{PrivacyForPrivatePartition}
\label{thm:vpp-partition-pure}
Let $\{{S}_j\}_{j=1}^d$ be the scores from~\eqref{eq:anova-f} computed on client $m$, and define $\overline{S}_j=\mathrm{clip}(S_j,0,H)$ under adjacency on $\mathcal D_m$. In each round, execute the one-shot Laplace Top-$k$ mechanism ($k=d_A$) by drawing $b_j\sim\mathrm{Lap}(\lambda)$ to form $y_j = \overline{S}_j + b_j$, outputting the indices of the top-$k$ values. If $\lambda \ge 2d_AHT/\epsilon_1$, the $T$-round partitioning process satisfies $(\epsilon_1,0)$-LDP for client $m$.
\end{restatable}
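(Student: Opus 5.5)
The plan is to prove a per-round guarantee and then apply basic sequential composition over the $T$ rounds. In each round, we will show that the one-shot Laplace Top-$k$ release with scale $\lambda$ satisfies $(\epsilon_1/T,0)$-LDP. Composing $T$ pure-DP mechanisms then gives $(\epsilon_1,0)$-LDP. No advanced composition is needed, because $\delta=0$ and the paper's budget already allots the factor $T$ linearly. The per-round claim will come directly from Theorem~\ref{thm:pure-dp}, applied with privacy parameter $\epsilon_1/T$. That theorem requires $\lambda\ge 2k\Delta_f/(\epsilon_1/T) = 2d_A\Delta_f T/\epsilon_1$. It therefore remains to exhibit a common sensitivity bound $\Delta_f\le H$ for the queries $f_j(\mathcal D_m)=\overline S_j$.

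The sensitivity step is where the clipping enters. For any adjacent $\mathcal D_m,\mathcal D_m'$, the unclipped ANOVA score $S_j$ can change by an unbounded amount, since $V^{\mathrm{tra}}_j$ can be tiny and the class counts shift. The clipped score, however, satisfies $\overline S_j\in[0,H]$, so $|\overline S_j(\mathcal D_m)-\overline S_j(\mathcal D_m')|\le H$ for every $j$. This is the per-query (common) sensitivity $\Delta_f=H$ in the sense of Definition~\ref{def:oneshot-laplace}, so substituting gives exactly the stated threshold $\lambda\ge 2d_AHT/\epsilon_1$.

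The main obstacle is checking that Theorem~\ref{thm:pure-dp} is stated in terms of a uniform per-coordinate bound. If instead it needed the $\ell_1$ sensitivity of the whole score vector, the trivial bound would be $dH$ and would not match. I will therefore check the Qiao et al.\ setting and argue that their analysis (the $2k$ factor) only requires each $|f_j(\mathcal D)-f_j(\mathcal D')|\le\Delta_f$. Failing that, a direct argument is available: the top-$k$ index set is a post-processing of the $k$ largest noisy values. Bounding the density ratio of the joint Laplace vector restricted to the relevant coordinates gives $e^{2k\Delta_f/\lambda}$.

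I also need to justify that the scores are recomputed on the current encoder $h_m$ each round. The encoder itself depends on earlier private outputs only through the public global prototypes and local training. I will treat the per-round mechanisms as adaptively composed, which sequential composition for pure DP permits. Finally, I will note that the released index set, and anything computed from it (the clipping bounds and noise covariance), is post-processing and inherits the guarantee.
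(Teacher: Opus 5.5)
Your proposal is correct and follows the paper's proof essentially step for step: the range bound $\overline S_j\in[0,H]$ gives a common per-coordinate sensitivity $H$, Theorem~\ref{thm:pure-dp} at budget $\epsilon_1/T$ gives the per-round guarantee, and basic sequential composition plus post-processing finish the argument. Your remarks on adaptivity and the round-dependent encoder are extra care the paper skips, and they hold because $H$ bounds the range of $\overline S_j$ for whatever encoder was trained on $\mathcal D_m$ or $\mathcal D_m'$.

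One caution concerns only your fallback. Do not argue through the privacy of the $k$ largest noisy values themselves, because releasing those values is not $(\epsilon,0)$-DP at this noise scale when $d$ is large. The direct argument shifts just the $k$ selected noise coordinates by $2\Delta_f$, checks that the selected set stays on top under $\mathcal D_m'$, and pays a factor of $e^{2k\Delta_f/\lambda}$.
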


Next, conditioned on the selected subspace, we calibrate noise for releasing the groupwise clipped prototypes.

\begin{restatable}[Release privacy]{theorem}
{PrivacyforPrototypeRelease}  
\label{thm:vpp-Prototype-Release}
Let the isotropic Gaussian release with sensitivity $\Delta$ achieve $(\epsilon_2,\delta)$–LDP after $T$ rounds using multiplier
\(\sigma_{\mathrm{ref}} \ge c_2 {\sqrt{T\ln(1/\delta)}}/{\epsilon_2}\).
Under VPP, use groupwise covariance $\Sigma_{\mathrm{VPP}}=\mathrm{diag}((\sigma_A\Delta_A)^2\mathbf I_{d_A},(\sigma_B\Delta_B)^2\mathbf I_{d_B})$
with $\Delta^2=\Delta_A^2+\Delta_B^2$. If the group multipliers satisfy
\begin{align}\label{eq:calibration}
\frac{1}{\sigma_A^2}+\frac{1}{\sigma_B^2}\le\frac{1}{\sigma_{\mathrm{ref}}^2},
\end{align}
then the $T$-round prototype release in VPP is also $(\epsilon_2,\delta)$–LDP for client $m$.
\end{restatable}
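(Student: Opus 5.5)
The plan is to reduce the groupwise mechanism to an isotropic Gaussian mechanism with unit-scale noise by whitening. I will then argue that the whitened mechanism has $\ell_2$ sensitivity no larger than that of the reference mechanism after the same normalization. Everything is conditioned on the partition $(\mathcal I_A,\mathcal I_B)$ output by the private selection step. That partition is accounted for separately by Theorem~\ref{thm:vpp-partition-pure} and composed sequentially, so here $\mathcal I_A,\mathcal I_B$, $R_A,R_B$, and hence $\Delta_A,\Delta_B$ may be treated as fixed.

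\textbf{Step 1 (sensitivity per group).} For adjacent $\mathcal D_m,\mathcal D_m'$, groupwise clipping~\eqref{eq:group-clip} gives $\|\mathbf p_A-\mathbf p_A'\|_2\le\Delta_A$ and $\|\mathbf p_B-\mathbf p_B'\|_2\le\Delta_B$. This holds for class means, and for clusters of size $n_m^{c,k}$ in the same way as in IGPP. Each per-example term changes by at most $2R_A$ or $2R_B$ and is divided by the count.

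\textbf{Step 2 (whitening).} Apply the invertible linear map $W=\mathrm{diag}\bigl((\sigma_A\Delta_A)^{-1}\mathbf I_{d_A},(\sigma_B\Delta_B)^{-1}\mathbf I_{d_B}\bigr)$. The VPP release $\mathbf p+\mathcal N(0,\Sigma_{\mathrm{VPP}})$ is then equivalent, by post-processing in both directions, to $W\mathbf p+\mathcal N(0,\mathbf I_d)$. The $\ell_2$ sensitivity of $W\mathbf p$ satisfies
\[
\|W(\mathbf p-\mathbf p')\|_2^2\le \frac{\Delta_A^2}{\sigma_A^2\Delta_A^2}+\frac{\Delta_B^2}{\sigma_B^2\Delta_B^2}=\frac{1}{\sigma_A^2}+\frac{1}{\sigma_B^2}\le\frac{1}{\sigma_{\mathrm{ref}}^2}.
\]
This last inequality is exactly the calibration condition~\eqref{eq:calibration}.

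\textbf{Step 3 (comparison with the reference).} The reference mechanism $\mathbf p+\mathcal N(0,(\sigma_{\mathrm{ref}}\Delta)^2\mathbf I_d)$ rescales to $\mathbf p/(\sigma_{\mathrm{ref}}\Delta)+\mathcal N(0,\mathbf I_d)$, whose sensitivity is exactly $1/\sigma_{\mathrm{ref}}$. The privacy loss of a unit-variance Gaussian mechanism depends only on the sensitivity $\mu$, since the privacy-loss random variable is $\mathcal N(\mu^2/2,\mu^2)$, and it is monotone in $\mu$. Hence each round of VPP is at least as private as each round of the reference, in the sense of its full privacy profile $\delta(\epsilon)$, and not merely at one $(\epsilon,\delta)$ pair. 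Across $T$ rounds I will compose adaptively. Each round is a unit Gaussian mechanism with sensitivity at most $1/\sigma_{\mathrm{ref}}$, conditioned on past outputs, and the Gaussian mechanisms compose to a single one with sensitivity $\sqrt{T}/\sigma_{\mathrm{ref}}$, via GDP/zCDP composition. Because the reference mechanism's $T$-round guarantee from Theorem~\ref{thm:wei-noise} with $\sigma_{\mathrm{ref}}\ge c_2\sqrt{T\ln(1/\delta)}/\epsilon_2$ is derived through exactly this kind of accounting, the VPP release inherits $(\epsilon_2,\delta)$-LDP.

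\textbf{Main obstacle.} The delicate point is Step~3. The cited Theorem~\ref{thm:wei-noise} is stated as a black box with an unspecified constant $c_2$, so I must ensure that the comparison goes through its proof rather than through an $(\epsilon,\delta)$ statement alone. I will therefore phrase the argument in terms of Rényi/zCDP. A Gaussian mechanism with sensitivity $\mu$ and unit noise is $\tfrac{\mu^2}{2}$-zCDP, so VPP per round is at most $\tfrac{1}{2\sigma_{\mathrm{ref}}^2}$-zCDP, which is the reference value. Composition then yields $\tfrac{T}{2\sigma_{\mathrm{ref}}^2}$-zCDP for both, and the standard conversion produces the same $(\epsilon_2,\delta)$ with the same form $c_2\sqrt{T\ln(1/\delta)}/\epsilon_2$. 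A secondary check is that the conditioning on $\mathcal I_A$ is legitimate in the adaptive setting. The partition in round $t$ depends only on data and earlier outputs, so the per-round bound holds uniformly over every realized partition, and sequential composition with Theorem~\ref{thm:vpp-partition-pure} is valid.
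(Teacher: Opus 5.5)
Your proposal is correct and is essentially the paper's argument: whitening by $W$ is the same computation as the paper's Mahalanobis bound $\frac{\alpha}{2}v^\top\Sigma_{\mathrm{VPP}}^{-1}v\le\frac{\alpha}{2}\bigl(1/\sigma_A^2+1/\sigma_B^2\bigr)\le\alpha/(2\sigma_{\mathrm{ref}}^2)$ from Lemma~\ref{lem:RDP-covariance}, after which both arguments compare against the reference mechanism's R\'enyi bound and convert to $(\epsilon_2,\delta)$. You go slightly beyond the paper, which leaves these points implicit, by spelling out the $T$-round composition and the conditioning on the privately selected partition, and by noting via the GDP/privacy-profile view that the dominance holds for the whole privacy profile, so the transfer from the black-box Theorem~\ref{thm:wei-noise} does not depend on how its constant was obtained.
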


\begin{remark} \label{rmk:weights-choice}
A convenient choice satisfying~\eqref{eq:calibration} is 
$\sigma_A = \sigma_{\mathrm{ref}}/\sqrt{w}$ and 
$\sigma_B = \sigma_{\mathrm{ref}}/\sqrt{1-w}$ with $w\in(0,1)$. 
In VPP, we set $w$ according to~\eqref{eq:weights}, so the noise allocation is fully determined by the discriminability-aware partition and automatically adapts to the relative dimensionalities $d_A$ and $d_B$, without introducing any extra tunable hyperparameter.  
\end{remark}

\vspace{0.5mm}
\noindent \textbf{Complexity and Overheads.}
VPDR introduces only modest overhead to existing ProtoPFLs.
(1) \textbf{Computation:} Per communication round, VPP incurs an extra $\mathcal{O}(n_m d + d\log d)$ time over baseline construction, while noise addition remains $\mathcal{O}(d)$. 
DCR adds $\mathcal{O}(n_m d C E/B)$ training time for the teacher forward pass and KL divergence computation. 
(2) \textbf{Memory:} Overhead is minimal, requiring only $\mathcal{O}(d)$ to store the score vector and mask, alongside $\mathcal{O}(dC)$ for the momentum teacher head. 
(3) \textbf{Communication:} Complexity remains strictly unchanged at $\mathcal{O}(Cd)$, as clients solely transmit the final privatized prototypes.

\section{Experiments}\label{sec:Experiments}
\begin{table*}[t]
\caption{\textbf{Comparison} of Average Accuracy (AVG) and Standard Deviation (STD) under domain skew. The \textbf{best} is marked.} 
\vspace{-2mm}
\centering
\small
\renewcommand{\arraystretch}{1.25}
\fontsize{8}{7.66}\selectfont 
\setlength{\tabcolsep}{1pt}
\begin{tabularx}{\linewidth}
{l l||*{4}{>{\centering\arraybackslash}X}|*{2}{>{\centering\arraybackslash}X}|
 *{4}{>{\centering\arraybackslash}X}|*{2}{>{\centering\arraybackslash}X}|
 *{4}{>{\centering\arraybackslash}X}|*{2}{>{\centering\arraybackslash}X}}
\noalign{\hrule height 0.8pt}
\rowcolor{gray!25}
\multicolumn{2}{c||}{\textbf{Methods}} & \multicolumn{6}{c|}{\textbf{Digits}} & \multicolumn{6}{c|}{\textbf{Office–Caltech}} & \multicolumn{6}{c}{\textbf{PACS}} \\
\cline{3-8} \cline{9-14} \cline{15-20} 
\rowcolor{gray!25}
Framework & +LDP & MNI. & USPS & SVHN & SYN & AVG$\uparrow$ & STD$\downarrow$
& Amz.  & Cal.  & DSR.  & Web. & AVG$\uparrow$ & STD$\downarrow$
& P.    & AP.   & Ct.  & Sk.  & AVG$\uparrow$ & STD$\downarrow$ \\ 
\hline\hline
  & +IGPP  & 98.20 & 93.42 & 89.80 & 96.00 & 94.36 & 3.45 
           & 95.32 & 91.07 & 81.38 & 95.51 & 90.82 & 6.62
           & 97.70 & 92.71 & 91.52 & 80.37 & 90.58 & 7.31 \\
\multirow{-2}{*}{FedProto~\cite{FedProto2022}} 
  & \textbf{+VPDR}  & 98.31 & 94.87 & 92.33 & 98.70 & \cellcolor{vpdr}{\textbf{96.05}} & \cellcolor{vpdr}{\textbf{2.78}} 
           & 96.71 & 93.55 & 84.85 & 98.32 & \cellcolor{vpdr}{\textbf{93.36}} & \cellcolor{vpdr}{\textbf{6.01}} 
           & 98.80 & 95.60 & 91.88 & 84.59 & \cellcolor{vpdr}{\textbf{92.71}} & \cellcolor{vpdr}{\textbf{6.11}} \\
\hline
  & +IGPP   & 98.30 & 90.08 & 91.54 & 95.35 & 93.82 & 3.41
           & 95.29 & 92.41 & 64.52 & 94.92 & 86.79 & 13.53
           & 97.31 & 85.33 & 80.34 & 55.92 & 79.73 & 16.08 \\
\multirow{-2}{*}{FedPCL~\cite{fedpcl2022}}
  & \textbf{+VPDR}  & 98.92 & 92.63 & 93.15 & 96.95 & \cellcolor{vpdr}{\textbf{95.41}} & \cellcolor{vpdr}{\textbf{2.63}}
           & 96.33 & 92.86 & 74.97 & 96.61 & \cellcolor{vpdr}{\textbf{90.19}} & \cellcolor{vpdr}{\textbf{8.90}} 
           & 98.50 & 91.93 & 87.18 & 68.03 & \cellcolor{vpdr}{\textbf{86.91}} & \cellcolor{vpdr}{\textbf{11.84}} \\
\hline
  & +IGPP  & 98.15 & 93.67 & 91.89 & 96.45 & 95.04 & 2.44
           & 94.76 & 92.36 & 82.15 & 96.31 & 91.40 & 6.37
           & 98.10 & 94.13 & 91.16 & 82.08 & 91.37 & 6.28 \\
\multirow{-2}{*}{FPL~\cite{fplhuang2023}}   
  & \textbf{+VPDR}  & 98.10 & 95.60 & 93.40 & 97.60 & \cellcolor{vpdr}{\textbf{96.20}} & \cellcolor{vpdr}{\textbf{2.35}}
           & 96.34 & 93.72 & 85.02 & 99.65 & \cellcolor{vpdr}{\textbf{93.68}} & \cellcolor{vpdr}{\textbf{6.36}}
           & 98.86 & 96.78 & 93.15 & 83.93 & \cellcolor{vpdr}{\textbf{93.18}} & \cellcolor{vpdr}{\textbf{5.86}} \\
\hline
  & +IGPP  & 97.81 & 90.43 & 92.57 & 92.20 & 93.25 & 2.88
           & 95.61 & 91.26 & 82.05 & 96.61 & 91.38 & 6.64
           & 97.90 & 92.15 & 84.11 & 74.20 & 87.09 & 9.34 \\
\multirow{-2}{*}{FedPLVM~\cite{fedplvm2024}}  
  & \textbf{+VPDR}  & 97.71 & 92.93 & 94.57 & 93.70 & \cellcolor{vpdr}{\textbf{94.73}} & \cellcolor{vpdr}{\textbf{1.79}}
           & 97.30 & 92.23 & 84.76 & 99.21 & \cellcolor{vpdr}{\textbf{93.38}} & \cellcolor{vpdr}{\textbf{6.07}}
           & 98.80 & 93.71 & 86.04 & 77.63 & \cellcolor{vpdr}{\textbf{89.05}} & \cellcolor{vpdr}{\textbf{8.38}} \\
\hline
  & +IGPP  & 97.62 & 93.87 & 91.69 & 97.20 & 95.10 & 2.44 
           & 96.34 & 92.38 & 83.97 & 96.55 & 92.31 & \textbf{5.88}
           & 98.80 & 96.62 & 92.09 & 84.29 & 92.95 & 5.80 \\
\multirow{-2}{*}{FedTGP~\cite{fedtgp2024}}  
  & \textbf{+VPDR}  & 97.84 & 95.32 & 93.24 & 98.75 & \cellcolor{vpdr}{\textbf{96.29}} & \cellcolor{vpdr}{\textbf{2.14}}
           & 97.34 & 96.61 & 86.21 & 100 & \cellcolor{vpdr}{\textbf{95.04}} & \cellcolor{vpdr}{6.19}
           & 100 & 97.32 & 94.83 & 87.01 & \cellcolor{vpdr}{\textbf{94.79}} & \cellcolor{vpdr}{\textbf{5.64}} \\
\hline
  & +IGPP   & 97.61 & 94.27 & 92.95 & 98.00 & 95.71 & \textbf{2.16}
           & 95.76 & 92.52 & 84.03 & 96.61 & 92.23 & \textbf{5.74}
           & 99.63 & 96.33 & 93.16 & 84.50 & 93.41 & 5.89 \\
\multirow{-2}{*}{MPFT~\cite{zhang2025mpft}}   
  & \textbf{+VPDR}  & 98.64 & 94.77 & 93.78 & 99.05 & \cellcolor{vpdr}{\textbf{96.56}} & \cellcolor{vpdr}{2.32}
           & 97.81 & 94.15 & 86.58 & 98.31 & \cellcolor{vpdr}{\textbf{94.71}} & \cellcolor{vpdr}{5.83}
           & 99.40 & 97.70 & 95.65 & 86.88 & \cellcolor{vpdr}{\textbf{94.91}} & \cellcolor{vpdr}{\textbf{5.04}} \\
\noalign{\hrule height 0.8pt}
\end{tabularx}
\label{tab:all_datasets_eps1_split}
\end{table*}

\begin{figure*}[htbp]
    \centering
    \begin{subfigure}[b]{0.198\textwidth}
        \centering
        \includegraphics[width=\textwidth]{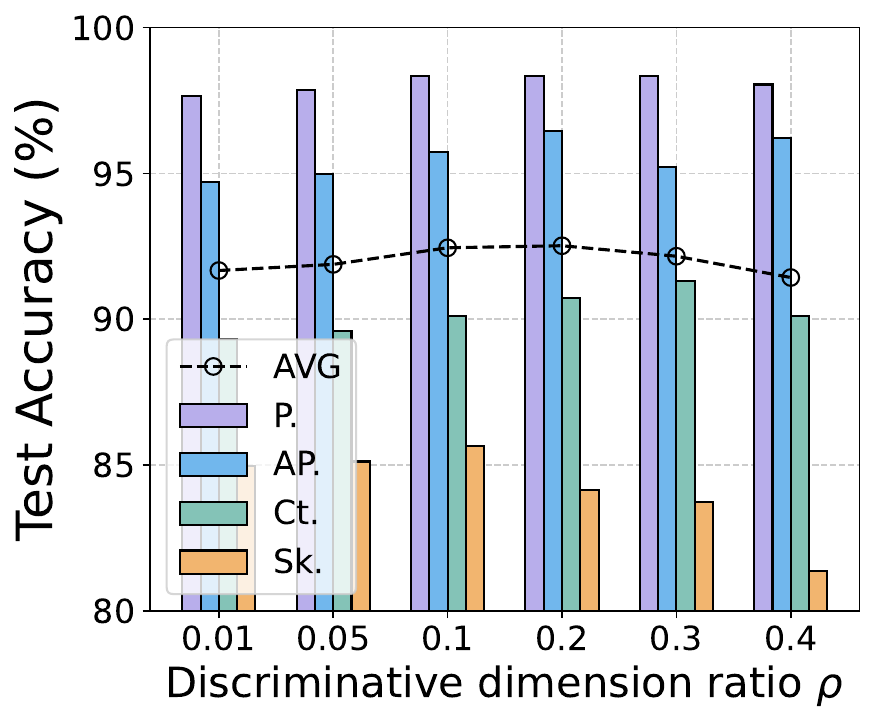}
        \caption{The effect of \(\rho\).}
    \end{subfigure}\hfill
    \begin{subfigure}[b]{0.198\textwidth}
        \centering
        \includegraphics[width=\textwidth]{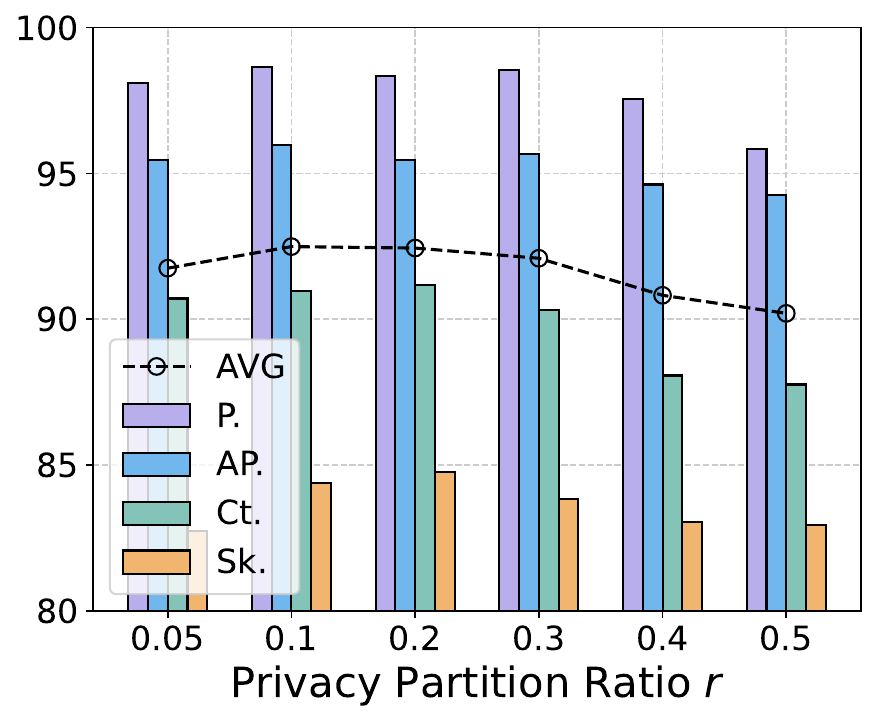}
        \caption{The effect of \(r\).}
    \end{subfigure}\hfill
    \begin{subfigure}[b]{0.198\textwidth}
        \centering
        \includegraphics[width=\textwidth]{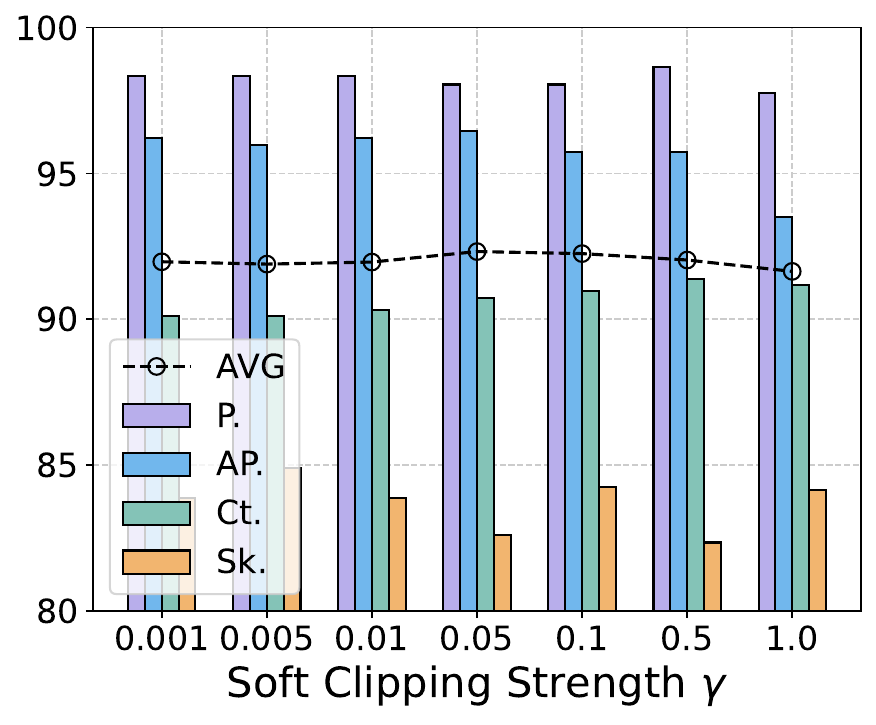}
        \caption{The effect of \(\gamma\).}
    \end{subfigure}
    \begin{subfigure}[b]{0.198\textwidth}
        \centering
        \includegraphics[width=\textwidth]{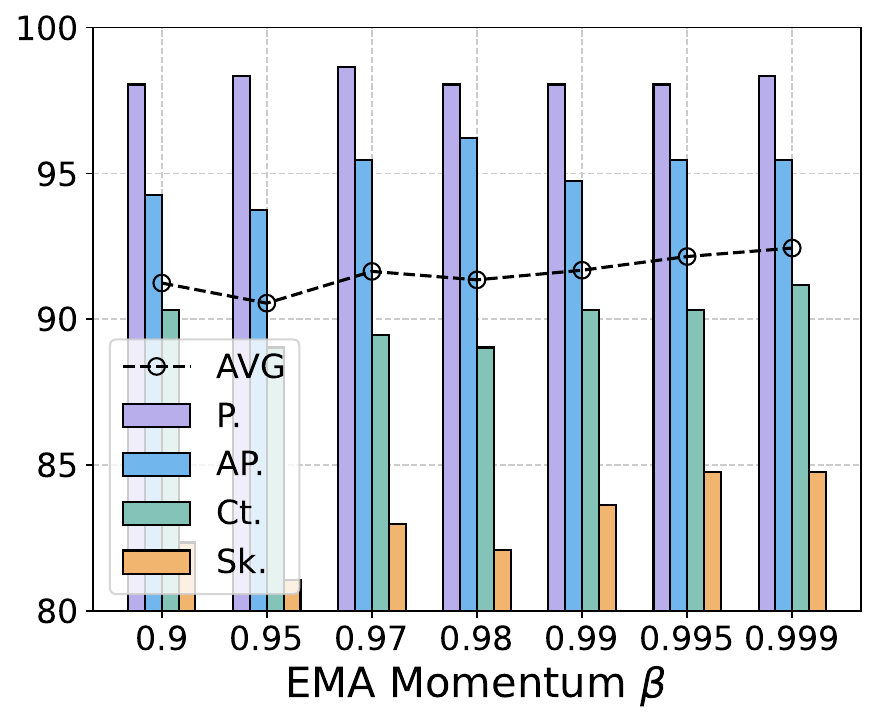}
        \caption{The effect of \(\beta\).}
    \end{subfigure}
    \begin{subfigure}[b]{0.198\textwidth}
        \centering
        \includegraphics[width=\textwidth]{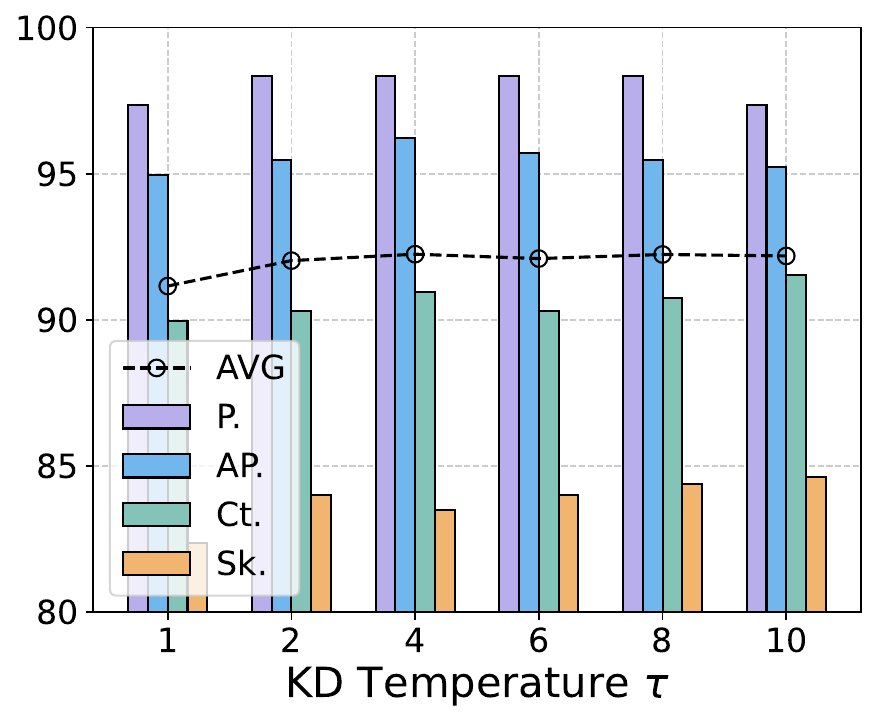}
        \caption{The effect of \(\tau\).}
    \end{subfigure}
    \vspace{-6mm}
    \caption{\textbf{Hyperparameter sensitivity} of FedProto~\cite{FedProto2022} framework with VPDR on PACS.}
    \label{fig:hyper_sensitivity}
\end{figure*}

\begin{table}[t]
\centering 
\caption{\textbf{Ablation results} of FedProto~\cite{FedProto2022} framework.}
\label{tab:ablation_modules}
\vspace{-2mm}
\setlength{\tabcolsep}{3pt}
\renewcommand{\arraystretch}{1.25}
\fontsize{8}{7.66}\selectfont
\begin{tabularx}{\columnwidth}{YY||YYYY|YY} 
\noalign{\hrule height 0.8pt}  
\rowcolor{gray!25} 
& & \multicolumn{6}{c}{\textbf{Office-Caltech}} \\ 
\cline{3-8}
\rowcolor{gray!25}
\multirow{-2}{*}{VPP} & \multirow{-2}{*}{DCR} & Amz. & Cal. & DSR & Web.  & AVG$\uparrow$   & STD $\downarrow$\\
\hline\hline
                &                & 95.32 & 91.07 & 81.38 & 95.51 & 90.82 & 6.62 \\
\checkmark      &                & 96.96 & 92.88 & 84.53 & 97.85 & 93.06 & 6.08 \\
                & \checkmark     & 97.08 & 92.60 & 84.57 & 97.31 & 92.89 & \textbf{5.96} \\
\checkmark      & \checkmark     & 96.71 & 93.55 & 84.85 & 98.32 &\cellcolor{vpdr} \textbf{93.36} & \cellcolor{vpdr}6.01 \\
\noalign{\hrule height 0.8pt} 
\rowcolor{gray!25}
& &  \multicolumn{6}{c}{\textbf{PACS}} \\
\cline{3-8}
\rowcolor{gray!25}
\multirow{-2}{*}{VPP} & \multirow{-2}{*}{DCR}  & P.    & AP.   & Ct.   & Sk.   & AVG$\uparrow$   & STD $\downarrow$\\
\hline\hline
                &                & 97.70 & 92.71 & 91.52 & 80.37 & 90.58 & 7.31 \\
\checkmark      &                & 98.91 & 93.15 & 92.18 & 83.31 & 91.89 & 6.44 \\
                & \checkmark     & 98.62 & 93.70 & 91.82 & 83.51 & 91.91 & 6.29 \\
\checkmark      & \checkmark     & 98.80 & 95.60 & 91.88 & 84.59 & \cellcolor{vpdr}\textbf{92.71} & \cellcolor{vpdr}\textbf{6.11}\\
\noalign{\hrule height 0.8pt}
\end{tabularx}
\end{table}

\subsection{Experimental Setup}
\vspace{0.5mm}
\noindent \textbf{Datasets.} 
We conduct experiments on three domain‐skew benchmarks: Digits~\cite{hull1994database,lecun1998gradient,netzer2011reading,roy2018effects}, which comprises MNIST (MNI.), USPS, SVHN and synthetic digits (SYN); Office–Caltech~\cite{gong2012geodesic}, consisting of Caltech (Cal.), Amazon (Amz.), DSLR (DSR.) and Webcam (Web.); PACS~\cite{li2017deeper} covering Photo (P.), Art Painting (AP.), Cartoon (Ct.) and Sketch (Sk.). 
Details are provided in Appendix~\ref{app:dataset}.

\vspace{0.5mm}
\noindent \textbf{Model.}  
All experiments use a ViT-small~\cite{dosovitskiy2021vit} backbone with a hidden size $d$ of 512 and frozen pretrained weights. 
We insert lightweight adapters after each Transformer block, and train only the adapters and the classifier. 
 
\vspace{0.5mm}
\noindent \textbf{Counterparts.}
We compare six ProtoPFL frameworks, all equipped with IGPP: FedProto~\cite{FedProto2022}, FedPCL~\cite{fedpcl2022}, FPL~\cite{fplhuang2023}, FedPLVM~\cite{fedplvm2024}, FedTGP~\cite{fedtgp2024}, and MPFT~\cite{zhang2025mpft}.
Personalization is achieved by exchanging only privatized prototypes. For MPFT, the server learns an adapter from the privatized prototypes and transmits it to clients, which is the post-processing of DP~\cite{dwork2006calibrating} and no extra privacy cost.

\vspace{0.5mm}
\noindent \textbf{Implementation Details.}
Each benchmark has four domain-specific clients~\cite{fedpcl2022,fedplvm2024},  training on $10\%$ of Digits and $30\%$ of Office–Caltech and PACS. We run communication rounds $T=20$ with local epochs $E=2$ and batch size $256$. Local optimization is AdamW with learning rate $10^{-3}$, weight decay $10^{-5}$, and momentum $0.9$. ProtoPFL hyperparameters are listed in Appendix~\ref{app:Beseline Hyperparameters}. Unless otherwise noted, we set $(\epsilon,\delta){=}(1,10^{-5})$ and select the clipping radius $R$ by grid search over $\{5,10,15,20\}$.
VPDR defaults are $H=0.1$, $r = 0.1$, $\rho = 0.2$, $\gamma = 0.05$, $\beta = 0.999$, $\tau= 4$ and $\lambda_1$= 0.05. 
We evaluate each personalized model on its own local data and report the average test accuracy. 

\subsection{Performance Comparison}\label{sec:Performance Comparison}
Table \ref{tab:all_datasets_eps1_split} shows that VPDR consistently improves average accuracy (AVG) compared with IGPP across all ProtoPFLs and benchmarks. Improvements are most pronounced on the more heterogeneous PACS and Office–Caltech. While a few per-domain scores fluctuate, the per-dataset averages for every framework increase, and dispersion (STD) typically shrinks on harder regimes, indicating that VPDR’s variance-adaptive noise and clipping stabilization primarily reduce multi-domain fluctuation where it matters most. 
Additional results and t-SNE plots are in Appendix~\ref{app:More Results of Performance Comparison}.

\subsection{Diagnostic Analysis}\label{sec:Diagnostic Analysis}
\noindent \textbf{Hyperparameter Study.}
Figure~\ref{fig:hyper_sensitivity} illustrates the impact of hyperparameters on PACS in FedProto~\cite{FedProto2022}. The optimal range for the privacy partition ratio $r$ lies between $0.1$ and $0.2$, while the discriminative subspace ratio $\rho$ performs best around $0.2$. The optimal soft clipping strength $\gamma$ is $0.05$, and the EMA momentum $\beta$ achieves its best performance at $0.999$. The KD temperature $\tau$ is most effective at $4$, with little variation between $4$ and $8$. These trends confirm that VPDR delivers robust performance across a wide spectrum and that our selected defaults lie close to the optimum.

\vspace{0.5mm}
\noindent \textbf{Ablation Study.} 
We ablate VPP and DCR within FedProto~\cite{FedProto2022} as shown in Table~\ref{tab:ablation_modules}.  
Starting from IGPP, adding either VPP or DCR alone improves performance on both Office–Caltech and PACS. Using both together yields the best average accuracy, confirming complementary gains.

\begin{table}[t]
\centering
\caption{\textbf{Average accuracy} (\%) on CIFAR-10 under label skew.}
\label{tab:label_skew_comparison}
\vspace{-2mm}
\setlength{\tabcolsep}{1pt}
\fontsize{8}{7.66}\selectfont
\renewcommand{\arraystretch}{1.25}
\begin{tabularx}{\columnwidth}{l l || *{5}{>{\centering\arraybackslash}X}}
\noalign{\hrule height 0.8pt}
\rowcolor{gray!25}
\multicolumn{2}{l||}{Methods} & $\alpha=0.1$ & $\alpha=0.5$ & $\alpha=1$ & $\alpha=5$ & $\alpha=10$ \\
\hline\hline
\multirow{2}{*}{FedProto~\cite{FedProto2022}} 
  & +IGPP  & 42.76 & 82.31 & \textbf{90.23} & 96.15 & 96.30 \\
  & \textbf{+VPDR}  & \textbf{44.14} & \textbf{85.00} & 89.99 & \textbf{96.34} & \textbf{96.55} \\
\hline
\multirow{2}{*}{FedPCL~\cite{fedpcl2022}}     
  & +IGPP  & 18.22 & 73.63 & 87.78 & 95.55 & \textbf{95.67} \\
  & \textbf{+VPDR}  & \textbf{20.92} & \textbf{73.98} & \textbf{88.29} & \textbf{95.63} & 95.61 \\
\hline
\multirow{2}{*}{FPL~\cite{fplhuang2023}}      
  & +IGPP  & 28.42 & 69.14 & 88.19 & 95.39 & 95.58 \\
  & \textbf{+VPDR}  & \textbf{37.18} & \textbf{80.14} & \textbf{88.73} & \textbf{96.04} & \textbf{96.29} \\
\hline
\multirow{2}{*}{FedPLVM~\cite{fedplvm2024}}   
  & +IGPP  & 37.33 & 76.95 & \textbf{87.21} & 95.06 & 95.32 \\
  & \textbf{+VPDR}  & \textbf{40.17} & \textbf{77.42} & 82.76 & \textbf{95.47} & \textbf{95.34} \\
\hline
\multirow{2}{*}{FedTGP~\cite{fedtgp2024}}     
  & +IGPP   & 40.71 & 79.92 & 87.68 & 95.28 & 95.47 \\
  & \textbf{+VPDR}  & \textbf{43.03} & \textbf{82.46} & \textbf{89.61} & \textbf{96.32} & \textbf{96.42} \\
\hline
\multirow{2}{*}{MPFT~\cite{zhang2025mpft}}    
  & +IGPP   & 46.79 & 79.06 & 91.23 & \textbf{96.64} & \textbf{96.71} \\
  & \textbf{+VPDR}  & \textbf{47.33} & \textbf{80.17} & \textbf{91.26} & 96.53 & 96.64 \\
\noalign{\hrule height 0.8pt}
\end{tabularx}
\end{table}
  
\begin{table*}[t]
\centering
\caption{\textbf{Privacy Attack Results} of FedProto~\cite{FedProto2022} on Office-Caltech under varying privacy budgets.}
\label{tab:office-attacks}
\vspace{-2mm}
\setlength{\tabcolsep}{3pt}
\fontsize{8}{7.66}\selectfont
\renewcommand{\arraystretch}{1.25}
\begin{tabularx}{\textwidth}{l|l||YYY|YYYY}
\noalign{\hrule height 0.8pt}
\rowcolor{gray!25}
& & \multicolumn{3}{c|}{\textbf{Feature Reconstruction (FSH)}} & \multicolumn{4}{c}{\textbf{Membership Inference (MIA)}} \\
\cline{3-5}\cline{6-9}
\rowcolor{gray!25}
\multirow{-2}{*}{$\epsilon$}  & \multirow{-2}{*}{LDP}
& Cos Sim\scriptsize{$\downarrow$} & cFFD\scriptsize{$\uparrow$} & Top-1 Hit(\%)$\downarrow$
& ROC-AUC\scriptsize{$\rightarrow0.5$} & TPR@1\%FPR\scriptsize{$\downarrow$} & Advantage\scriptsize{$\rightarrow0$} & F1 Score\scriptsize{$\downarrow$} \\
\hline\hline
- & NoLDP   & 0.9999$\pm$0.0000 & 257.7$\pm$65.0 & 100.00$\pm$0.00 & 0.6729$\pm$0.0124 & 0.0298$\pm$0.0200 & 0.4458$\pm$0.0170 & 0.7881$\pm$0.0082 \\
\hline
\multirow{3}{*}{1} 
& +IGPP   & 0.6437$\pm$0.0022 & 2214.6$\pm$21.0 & 20.29$\pm$3.52 & 0.5079$\pm$0.0340 & 0.0043$\pm$0.0045 & 0.2669$\pm$0.0300 & 0.7317$\pm$0.0085 \\
& \textbf{+VPP}   & 0.6456$\pm$0.0026 & 2197.6$\pm$18.2 & 21.22$\pm$4.04 & 0.4992$\pm$0.0293 & 0.0040$\pm$0.0050 & 0.2536$\pm$0.0279 & 0.7282$\pm$0.0119 \\
& \cellcolor{vpdr}\textbf{+VPDR}
& \cellcolor{vpdr}0.6456$\pm$0.0032
& \cellcolor{vpdr}2207.1$\pm$27.7
& \cellcolor{vpdr}20.62$\pm$3.33
& \cellcolor{vpdr}0.5055$\pm$0.0275
& \cellcolor{vpdr}0.0042$\pm$0.0044
& \cellcolor{vpdr}0.2649$\pm$0.0299
& \cellcolor{vpdr}0.7321$\pm$0.0109 \\
\hline
\multirow{3}{*}{2} & +IGPP   & 0.6710$\pm$0.0025 & 2038.5$\pm$22.7 & 33.23$\pm$3.59 & 0.5081$\pm$0.0336 &                       0.0066$\pm$0.0036 & 0.2855$\pm$0.0332 & 0.7399$\pm$0.0073 \\
& \textbf{+VPP}   & 0.6706$\pm$0.0025 & 2040.7$\pm$26.6 & 34.44$\pm$5.83 & 0.5097$\pm$0.0228 & 0.0065$\pm$0.0016 & 0.2834$\pm$0.0244 & 0.7406$\pm$0.0084 \\
& \cellcolor{vpdr}\textbf{+VPDR}
& \cellcolor{vpdr}0.6718$\pm$0.0029
& \cellcolor{vpdr}2029.4$\pm$26.0
& \cellcolor{vpdr}33.75$\pm$2.90
& \cellcolor{vpdr}0.5089$\pm$0.0299
& \cellcolor{vpdr}0.0063$\pm$0.0031
& \cellcolor{vpdr}0.2846$\pm$0.0309
& \cellcolor{vpdr}0.7404$\pm$0.0117 \\
\noalign{\hrule height 0.8pt}
\end{tabularx}
\end{table*}

\vspace{0.5mm}
\noindent \textbf{Scalability under Label Skew.}
We induce client-level label skew on CIFAR-10~\cite{krizhevsky2009learning} using a Dirichlet distribution with concentration $\alpha \in \{0.1, 0.5, 1, 5, 10\}$, where smaller $\alpha$ means higher skew. 
Table~\ref{tab:label_skew_comparison} ($M=20$, $\epsilon=1$) shows that VPDR yields the largest gains under extreme skew ($\alpha \le 0.5$). While this advantage narrows toward the IGPP as data becomes balanced, it highlights that VPP's discriminative focus and DCR's norm stabilization thrive under high heterogeneity. 
Furthermore, we verify VPDR's robustness under joint model--data heterogeneity and its generalization to text modality in Appendices~\ref{app:model-data-skew} and~\ref{app:generalization-text}.

\subsection{Defense Assessment Under Attacks}
\label{sec:attack}
We consider a realistic deployment threat model in which an adversary can access client-uploaded prototypes, either as a semi-honest server or as an external eavesdropper. 
We evaluate two attacks against all clients per communication round: Feature-Space Hijacking (FSH)~\cite{zhang2025mpft} and Membership Inference Attacks (MIA)~\cite{shokri2017membership}. 
FSH actively reconstructs raw client data by optimizing dummy inputs to match targeted prototypes. A successful defense yields low cosine similarity, high classifier feature fr\'echet distance (cFFD), and low Top-1 Hit rates. Conversely, MIA passively infers training set membership by thresholding sample-to-prototype distances, evaluated via ROC-AUC (where values near 0.5 indicate random guessing), TPR@1\%FPR, advantage, and F1 score.
Detailed attack settings are deferred to Appendix~\ref{app:Details of Privacy Attacks}.

Table~\ref{tab:office-attacks} reports the attack outcomes on Office-Caltech under $\epsilon \in \{1, 2\}$, with identical trends observed on Digits and PACS (Appendix~\ref{app:Details of Privacy Attacks}). 
Compared to NoLDP, all LDP mechanisms drastically degrade the attacker's capabilities.  Crucially, VPP and VPDR defense metrics match the IGPP baseline within a single standard deviation. At $\epsilon=1$, all three mechanisms drive MIA ROC-AUC to chance ($\sim 0.50$) and throttle the FSH Top-1 Hit rate to roughly $20\%$. 
These results confirm that VPDR's variance-adaptive reallocation and soft clipping do not leak structural vulnerabilities, achieving superior utility under the same practical defense strength as isotropic Gaussian perturbation. 
More results varying the $d$ and $r$ are in Appendix~\ref{app:effect-dr}.

\subsection{Computational Cost} 
Figure~\ref{fig:time} reports the per-round, per-client wall-clock time on Office-Caltech and PACS, decomposed into Prototype Generation (ProtoGen) and Local Fine-Tuning (FT). 
Across six ProtoPFL frameworks, VPDR increases total runtime by an average of only $0.28$s ($8.1\%$) compared to the IGPP baseline, with phase-level increases of $+0.09$s for ProtoGen and $+0.20$s for FT. 
Table~\ref{tab:time_overhead} details this cost for FedProto on Office-Caltech. VPDR operations incur marginal overheads: $2.2\%$ in ProtoGen (variance statistics and Laplace Top-$k$) and $0.2\%$ in FT (soft clipping, teacher-head forward, KL divergence term). Ultimately, VPDR delivers accuracy gains with acceptable or even negligible computational overhead and no additional server-side work.

\begin{figure}[t]
  \centering 

  \begin{subfigure}[b]{0.88\linewidth}
    \includegraphics[width=\linewidth]{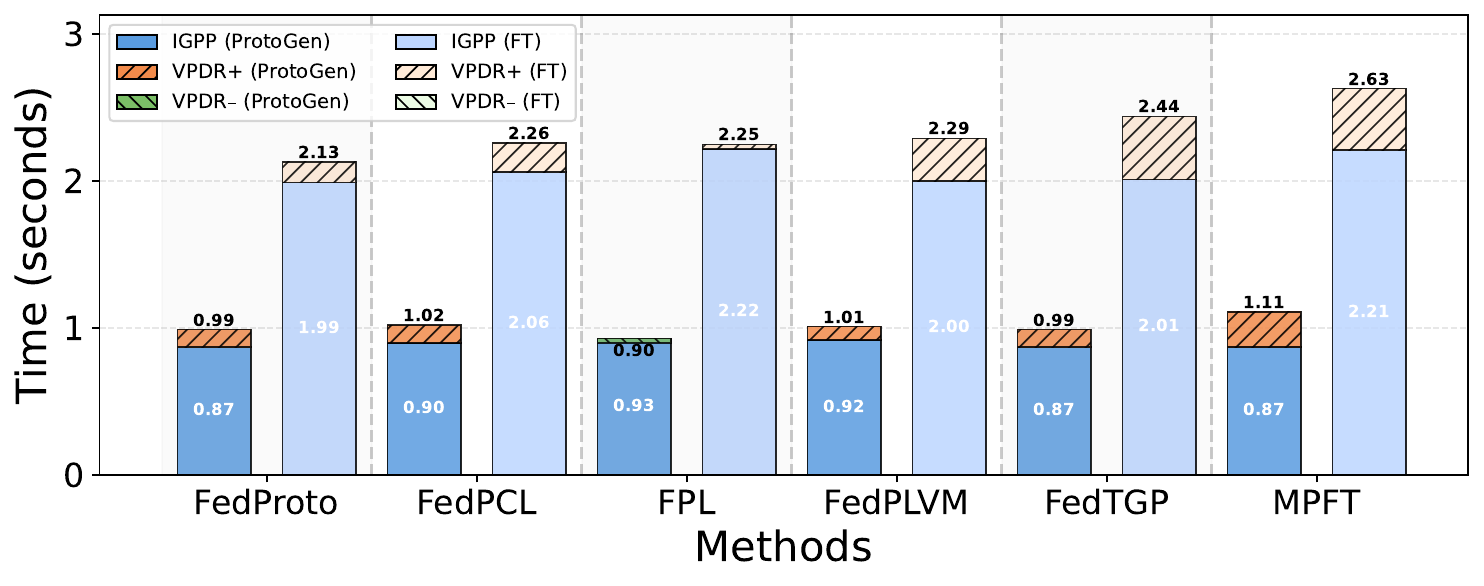}
    \caption{Office-Caltech.}
    \label{fig:time-off}
  \end{subfigure}

  \begin{subfigure}[b]{0.88\linewidth}
    \includegraphics[width=\linewidth]{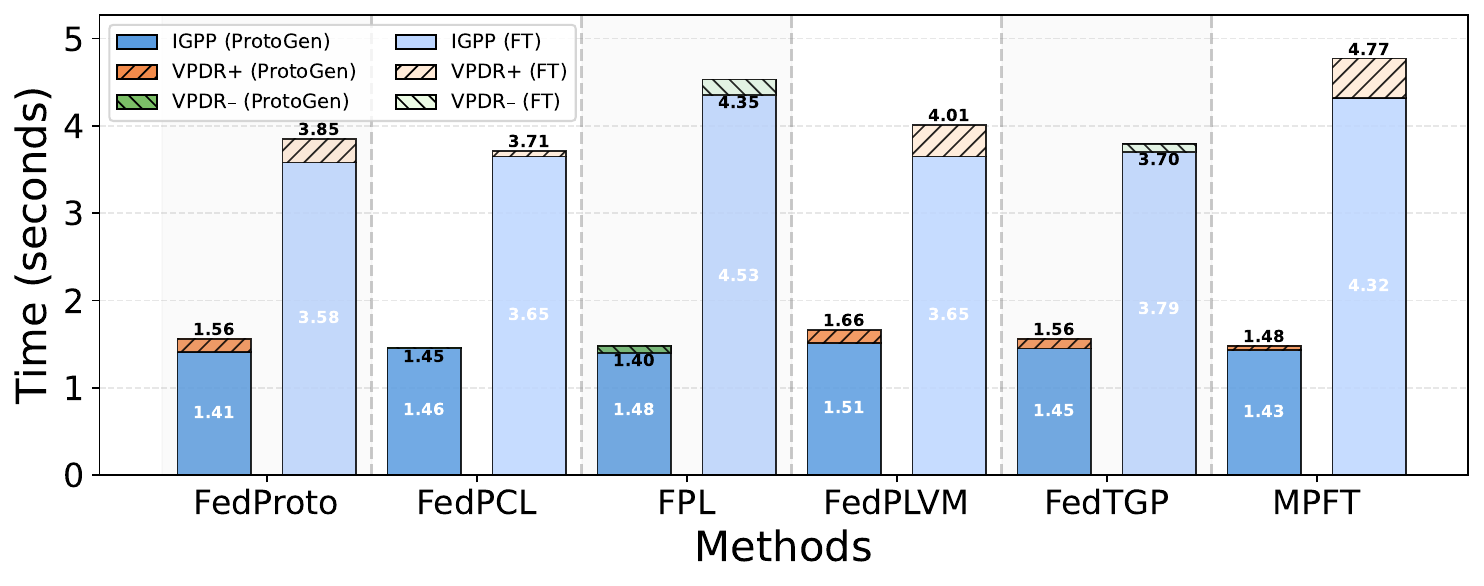}
    \caption{PACS.}
    \label{fig:time-pacs}
  \end{subfigure}
  
  \vspace{-2mm}
  \caption{\textbf{Per-client average time (s)} per communication round. }
  \label{fig:time}
\end{figure}

\begin{table}[t]
\centering
\caption{Runtime (ms) of FedProto with VPDR on Office-Caltech.}
\label{tab:time_overhead}
\vspace{-2mm}
\setlength{\tabcolsep}{0.6pt}
\fontsize{8}{7.66}\selectfont
\renewcommand{\arraystretch}{1.25}
\begin{tabularx}{\columnwidth}{l || c | Y | c }
\noalign{\hrule height 0.8pt}
\rowcolor{gray!25}
Stage & Base & Extra VPDR & Overhead\\
\hline\hline
ProtoGen & 1057 & 13.22 (Stats) + 9.78 (Top-$k$) = 23.00 & \textbf{2.2\%}\\
FT & 2055 & 1.58 (Clip) + 1.59 (Tch) + 1.38 (KL) = 4.55 & \textbf{0.2\%}\\
\noalign{\hrule height 0.8pt}
\end{tabularx}
\end{table}

\section{Conclusion}
We introduce VPDR, a client-side plug-in for privacy-preserving ProtoPFL that combines VPP, which reallocates noise across feature dimensions by discriminative strength, and DCR, which enforces soft-clipping and prediction consistency via knowledge distillation. Our analysis provides LDP guarantees no weaker than those of isotropic perturbation. Across diverse ProtoPFLs and multi-domain benchmarks, VPDR delivers higher utility at fixed privacy budgets and remains robust to label skew, while preserving communication cost and incurring only modest compute overhead. 
Evaluations against two attacks show near-chance performance, indicating strong protection.

\newpage   
\section*{Acknowledgements} 
This work was supported by the Beijing Advanced Innovation Center for Future Blockchain and Privacy Computing, 
the National Natural Science Foundation of China (U25B2070, 62372493), 
the Beijing Natural Science Foundation (Z230001), 
the China Postdoctoral Fellowship Fund (2024M764092), 
and the Beihang Dare to Take Action Plan (JKF-20240773).

{
    \small
    \bibliographystyle{ieeenat_fullname}
    \bibliography{ref}
}

\clearpage
\setcounter{page}{1}

\appendix
\maketitlesupplementary
\setcounter{tocdepth}{-1}
\renewcommand{\contentsname}{}
\addtocontents{toc}{\protect\setcounter{tocdepth}{2}}
\tableofcontents

\section{Related Work} 

\subsection{Differential Privacy in Federated Learning}
In privacy-preserving Federated Learning (FL), Differential Privacy (DP) has become a widely adopted approach to safeguard sensitive information during model updates. Early methods like DP-FedAvg~\cite{geyer2017differentially} employed Gaussian mechanisms, using $\ell_2$ norm clipping and noise injection to ensure client-level DP prior to communication. This approach relies on the server aggregating updates from clients, with privacy guarantees applied at the server level, often referred to as server-side client-level DP.
Subsequent advancements, such as BLUS+LUS~\cite{Cheng2022fedblurs} and DP-FedSAM~\cite{Shi2023fedsam}, incorporated local regularization and update sparsification techniques to mitigate sensitive gradient directions, while optimizing model flatness to minimize the performance degradation induced by DP noise.
For Personalized FL (PFL), significant progress has been made in privacy-preserving methods. PPSGD~\cite{bietti2022ppsgd} introduced a personalized parameter to balance local and global models while ensuring privacy through joint DP. Similarly, FedDPA~\cite{yang2023feddpa} employs dynamic hierarchical Fisher information to select personalized parameters, and adaptively adjusts $\ell_2$ regularization to improve robustness against clipping and reduce noise interference.

However, these methods typically assume an honest server, whereas, in practice, servers are often honest-but-curious — they follow protocols but may attempt to infer sensitive information from client updates. To address this, Local Differential Privacy (LDP) ensures privacy even in the presence of semi-honest servers or external eavesdroppers. Representative methods like UDP-FL~\cite{wei2021user} add calibrated Gaussian noise to each client update, guaranteeing client-level LDP. More recent techniques, such as ACS-FL~\cite{he2023clustered}, enhance FL models under LDP by combining adaptive clipping, weight compression, and parameter shuffling. These methods consider layer-wise ranges and budget accumulation to reduce both noise and communication, improving the privacy-utility trade-off in heterogeneous IoT data. FedFR-ADP~\cite{wang2025fedfr} introduces an adaptive LDP scheme that calibrates each client's Gaussian noise based on heterogeneity measured via Earth Mover’s Distance, dynamically adjusting the privacy budget. ALDP-FL~\cite{cui2025aldp} fine-tunes per-layer clipping via moving-average norms and injects bounded layer-wise LDP noise, improving utility while mitigating reconstruction attacks.

Despite these advancements, existing LDP methods primarily focus on privacy for gradient-based updates or model parameters, with limited attention to the privacy risks associated with prototype frameworks.

\subsection{Prototype-based Personalized FL}  
Prototype-based Personalized FL (ProtoPFL) leverages class prototypes to model multi-domain feature distribution shifts, enhancing personalization through shared and local prototypes. Early works like FedProto~\cite{FedProto2022} introduced prototype exchange to mitigate gradient misalignment in heterogeneous data, while FedPCL~\cite{fedpcl2022} used prototype-level contrastive learning to align local prototypes without sharing model parameters. FPL~\cite{fplhuang2023} introduced cluster-based and unbiased prototypes to improve model stability and discriminability across domains, and FedGMKD~\cite{zhang2024fedgmkd} incorporated knowledge distillation and clustering to improve robustness to heterogeneous data without relying on public datasets.
FedPLVM~\cite{fedplvm2024} proposed a dual-level clustering method to reduce communication costs while improving generalization with an $\alpha$-sparsity loss function. More recent works have incorporated server-side training, such as FedTGP~\cite{fedtgp2024}, which introduced trainable global prototypes and adaptive-margin contrastive learning to enhance inter-class distance and semantic consistency, and FedKTL~\cite{fedktl2024}, which uses a pre-trained server model to generate class prototype pairs for multi-domain semantic alignment. MPFT~\cite{zhang2025mpft} uploads a small number of client-generated prototypes for global model fine-tuning, enhancing multi-domain adaptation. These advancements have shifted ProtoPFL from static class-center sharing to dynamic semantic alignment and knowledge fusion, improving feature space alignment and model generalization across clients.

However, because class prototypes are directly derived from client-specific sensitive data, sharing them can inadvertently expose private semantic information, especially in highly heterogeneous and sensitive tasks. While some defenses have been proposed~\cite{fedpcl2022,fedplvm2024,zhang2025mpft}, such as injecting isotropic Gaussian noise into prototypes, these methods often lack formal privacy guarantees or fail to account for per-example clipping or local DP for each client. Our approach addresses this by providing formal LDP guarantees within ProtoPFL, ensuring privacy during prototype exchange while maintaining the utility of shared prototypes. 

\section{Notations} \label{app:notaions} 
Table~\ref{tab:notation} presents the formal notation used in this paper.
\label{app:notations}

\begin{table}[t]
\centering
\caption{Summary of notation.}
\vspace{-2mm}
\label{tab:notation}
\fontsize{8}{7.66}\selectfont
\setlength{\tabcolsep}{1pt}  
\renewcommand{\arraystretch}{1.25} 
\begin{tabularx}{\columnwidth}{l||L}
\noalign{\hrule height 0.8pt}
\rowcolor{gray!30}
\textbf{Symbol} & \textbf{Description} \\
\hline\hline
$C,\,M;\,c,\,m$ & Number of classes and clients; class and client indices \\
$T,\,E,\,B$ & Communication rounds, local epochs, and batch size \\
$\mathcal{D}_m,\,\mathcal{Z}_m$ & Client-$m$ local dataset and embedding set\\
$n_m,\,n_m^c,\,n_m^{c,k}$ & Client-$m$ total, per-class, and per-cluster sample counts \\
$d$ & Embedding dimension \\
\hline
$g,\,h_m,\,f_m$ & Backbone, client-$m$ encoder, and classifier \\ 
$\mathbf{z},\,\overline{\mathbf{z}},\,\hat{\mathbf{z}}$ & Original, $\ell_2$-clipped, and soft-clipped features \\
$\mathbf{p}_m^{c,k}$ & Prototype of class $c$ on client $m$ (cluster $k$) \\
$\mathcal{P}_m,\,\mathcal{P}_g$ & Client-$m$ local and global prototype set \\
\hline
$S_j$ & Discriminability score of coordinate $j$ \\
$H$ & Clipping cap applied to scores $S_j$ \\
$\rho$ & Fraction of dimensions kept as discriminative \\
$\mathcal{I}_A,\,\mathcal{I}_B$ & Discriminative / residual index sets \\
$d_A,\,d_B$ & Dimensions of discriminative and residual subspaces \\
$R$ & Global clipping threshold \\ 
$R_A,\,R_B$ & Groupwise clipping thresholds on $\mathcal{I}_A$ and $\mathcal{I}_B$ \\
\hline
$\epsilon,\,\delta,\,\Delta_f$ & LDP budget parameters and $\ell_2$-sensitivity of $f$ \\
$\epsilon_1,\,\epsilon_2$ & Privacy budgets for partition and prototype release \\
$r$ & Privacy split ratio ($\epsilon_1 = r\epsilon$, $\epsilon_2 = (1-r)\epsilon$) \\ 
$\Delta,\,\Delta_A,\,\Delta_B$ & Isotropic and groupwise prototype sensitivities \\
$\lambda$ & Noise scale for oneshot Laplace Top-$k$\\
$\sigma_{\mathrm{iso}},\,\sigma_{\mathrm{ref}}$ & Noise multipliers of IGPP and reference mechanism \\
$\sigma_A,\,\sigma_B$ & Groupwise noise multipliers in VPP \\
\hline
$\gamma,\,\beta$ & Soft-clipping strength and EMA momentum \\ 
$y_m^t,\,y_m^s$ & Client-$m$ Teacher and student soft targets  \\
$\tau,\,\lambda_1$ & Distillation temperature and KD weight \\
$\mathcal{L}_{\mathrm{BASE}},\,\mathcal{L}_{\mathrm{KD}}$ & Base ProtoPFL loss, and distillation loss\\
\noalign{\hrule height 0.8pt}
\end{tabularx}
\end{table}

\section{Privacy Budget Allocation and Calibration}
\label{app:Privacy Allocation}

\begin{figure}[hbpt]
    \centering
    \includegraphics[width=0.9\linewidth]{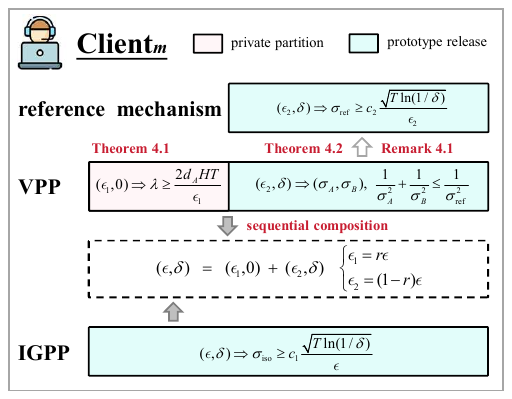}
    \vspace{-2mm}
    \caption{\textbf{Privacy allocation and calibration}. Schematic view of how IGPP and VPP use the per-client budget $(\epsilon,\delta)$ and how the reference isotropic mechanism is employed to calibrate VPP.}
    \label{fig:privacy_allocation}
\end{figure}

Figure~\ref{fig:privacy_allocation} compares how the per-client local DP budget $(\epsilon,\delta)$ is allocated in IGPP and VPP. IGPP spends the entire budget on an isotropic Gaussian prototype mechanism, choosing $\sigma_{\mathrm{iso}}$ so that the resulting $T$-round mechanism is $(\epsilon,\delta)$–LDP for each client (Theorem~\ref{thm:wei-noise}).
In contrast, VPP decomposes the local budget as $(\epsilon,\delta) = (\epsilon_1,0) + (\epsilon_2,\delta)$ with $\epsilon_1 = r\epsilon$ and $\epsilon_2 = (1-r)\epsilon$. The pure-DP share $(\epsilon_1,0)$ is used for the private discriminative partition on scores ${S_j}$ over $T$ rounds (Theorem~\ref{thm:vpp-partition-pure}), while $(\epsilon_2,\delta)$ is reserved for T rounds of prototype release.
To analyze the release in VPP, we introduce a conceptual \emph{reference} isotropic Gaussian mechanism from the same family as IGPP. IGPP corresponds to using this family under the full budget $(\epsilon,\delta)$ with multiplier $\sigma_{\mathrm{iso}}$, whereas the reference mechanism is calibrated only to $(\epsilon_2,\delta)$ with multiplier $\sigma_{\mathrm{ref}}$, so that it achieves $(\epsilon_2,\delta)$–LDP over $T$ rounds (Theorem~\ref{thm:wei-noise}). VPP then replaces this reference isotropic release with an anisotropic Gaussian mechanism with groupwise multipliers $(\sigma_A,\sigma_B)$. Under the calibration condition~\eqref{eq:calibration}, Theorem~\ref{thm:vpp-Prototype-Release} shows that its privacy loss is no larger than that of the reference mechanism, and hence the VPP release also satisfies $(\epsilon_2,\delta)$–LDP. By sequentially composing the partition step with budget $(\epsilon_1,0)$ and the release step with budget $(\epsilon_2,\delta)$, VPP attains an overall $(\epsilon,\delta)$–LDP guarantee. See details in Section~\ref{app:proof of vpp-Prototype-Release}

\section{Details for Privacy Analysis}
\label{app:More Proof of Privacy Analysis}
All mechanisms are analyzed as acting locally on a single client's dataset under the per-example adjacency relation $\mathcal{D}\sim\mathcal{D}'$ defined in Section~\ref{subsec:LDP}. Consequently, the resulting $(\epsilon,\delta)$ guarantees represent client-side Local DP guarantees.
 
\subsection{Proof of Theorem~\ref{thm:vpp-partition-pure}}
\PrivacyForPrivatePartition*
\begin{proof}
Clipping yields per-coordinate $\ell_1$ sensitivity
\[
\Delta_f \le \sup_{\mathcal{D}\sim\mathcal{D}'} \bigl|f_j(\mathcal{D}) - f_j(\mathcal{D}')\bigr| \;\le\; H.
\]
By Theorem~\ref{thm:pure-dp}, the oneshot Laplace Top-$k$ mechanism run on a client in each round is
$(\epsilon_1/T,0)$–DP provided
\[
  \lambda \;\ge\; \frac{2 d_A H T}{\epsilon_1},
\]
and the index partition is obtained by post-processing. Composing over $T$ rounds then yields
$(\epsilon_1,0)$–LDP for client $m$ in FL, as claimed.
\end{proof}

\subsection{Proof of Theorem~\ref{thm:vpp-Prototype-Release}} \label{app:proof of vpp-Prototype-Release}
We adopt the R\'enyi (RDP) framework~\cite{mironov2017renyi} and analyze each client's local mechanism under the per-example adjacency $\mathcal{D}\sim\mathcal{D}'$ defined in Section~\ref{subsec:LDP}. Applying the RDP-to-DP conversion then yields LDP guarantees~\cite{wei2021user,tran2025privacy}. 
The relevant definitions are recalled below.

\begin{definition}[R\'enyi Divergence \cite{van2014renyi}]
Given two probability distributions \( P \) and \( Q \), the R\'enyi divergence of order \( \alpha > 1 \) between \( P \) and \( Q \) is defined as
\[D_{\alpha}(P \| Q)
= \frac{1}{\alpha - 1} \log \mathbb{E}_{x \sim Q} \!\left[ \Bigl(\frac{P(x)}{Q(x)}\Bigr)^{\alpha} \right].
\]
\end{definition}

\begin{definition}[RDP~\cite{mironov2017renyi}]
A randomized mechanism \( \mathcal{M}: \mathcal{X} \to \mathcal{R} \) is said to satisfy \( (\alpha, \epsilon(\alpha)) \)-RDP if for all adjacent datasets \(\mathcal{D}\sim \mathcal{D'}\), 
\[
D_{\alpha}(\mathcal{M}(\mathcal{D}) \| \mathcal{M}(\mathcal{D'})) \leq \epsilon(\alpha).
\]
\end{definition}

\begin{lemma}[(\(\alpha, \epsilon(\alpha)\))-RDP to \((\epsilon, \delta)\)-LDP~\cite{mironov2017renyi}]
\label{lem:rdptodp}
If a mechanism \(\mathcal{M}\) satisfies \((\alpha, \epsilon(\alpha))\)-RDP under the per-example adjacency $\mathcal{D}\sim\mathcal{D}'$, then for any \(\delta \in (0,1)\) it also satisfies \((\epsilon, \delta)\)-LDP with
\(\epsilon= \epsilon(\alpha) + \log(1/\delta)/(\alpha - 1) \).
\end{lemma}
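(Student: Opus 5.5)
The plan is to prove the conversion directly from the definition of R\'enyi divergence, following the standard argument of~\cite{mironov2017renyi}. Fix adjacent $\mathcal{D}\sim\mathcal{D}'$ and write $P=\mathcal{M}(\mathcal{D})$ and $Q=\mathcal{M}(\mathcal{D}')$. Since $D_\alpha(P\|Q)\le\epsilon(\alpha)<\infty$ with $\alpha>1$, $P$ must be absolutely continuous with respect to $Q$. Hence the likelihood ratio $P/Q$ is well defined $Q$-a.s., and $P(\mathcal{S})=\mathbb{E}_{x\sim Q}[(P(x)/Q(x))\mathbf{1}_{\mathcal{S}}(x)]$ for every measurable $\mathcal{S}\subseteq\mathcal{R}$. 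For the Gaussian mechanisms in this paper this is immediate, since all densities are positive everywhere.

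\textbf{Step 1 (probability preservation).} Apply H\"older's inequality under $Q$ with conjugate exponents $\alpha$ and $\alpha/(\alpha-1)$:
\begin{align*}
P(\mathcal{S}) \le \Bigl(\mathbb{E}_{Q}\bigl[(P/Q)^{\alpha}\bigr]\Bigr)^{1/\alpha} Q(\mathcal{S})^{(\alpha-1)/\alpha}
= \exp\Bigl(\tfrac{\alpha-1}{\alpha}D_\alpha(P\|Q)\Bigr) Q(\mathcal{S})^{(\alpha-1)/\alpha}.
\end{align*}
Using the RDP bound and raising both sides to the power $\alpha/(\alpha-1)$ gives
\begin{align*}
P(\mathcal{S})^{\alpha/(\alpha-1)}\le e^{\epsilon(\alpha)}Q(\mathcal{S}).
\end{align*}
I expect this to be the only delicate step. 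The points to check are the choice of exponents, so that the $\alpha$-th moment of the likelihood ratio appears exactly, and the measure-theoretic justification of the change of measure.

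\textbf{Step 2 (case split).} Set $\epsilon=\epsilon(\alpha)+\log(1/\delta)/(\alpha-1)$.

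If $P(\mathcal{S})\le\delta$, the LDP inequality holds trivially.

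Otherwise $P(\mathcal{S})>\delta$, and hence $P(\mathcal{S})^{1/(\alpha-1)}>\delta^{1/(\alpha-1)}$. Step 1 then yields
\begin{align*}
Q(\mathcal{S})\ge e^{-\epsilon(\alpha)}P(\mathcal{S})\,P(\mathcal{S})^{1/(\alpha-1)} > e^{-\epsilon(\alpha)}\delta^{1/(\alpha-1)}P(\mathcal{S}) = e^{-\epsilon}P(\mathcal{S}).
\end{align*}
That is, $P(\mathcal{S})<e^{\epsilon}Q(\mathcal{S})$.

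In both cases $P(\mathcal{S})\le\max\{\delta,\,e^{\epsilon}Q(\mathcal{S})\}\le e^{\epsilon}Q(\mathcal{S})+\delta$.

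\textbf{Step 3 (conclusion).} The per-example adjacency relation is symmetric, and the RDP hypothesis holds for every ordered adjacent pair. The bound therefore holds for all adjacent $\mathcal{D},\mathcal{D}'$ and all measurable $\mathcal{S}$. This is exactly $(\epsilon,\delta)$-LDP for the client's local mechanism.
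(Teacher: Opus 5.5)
Your proof is correct. It is essentially the standard argument of \cite{mironov2017renyi}, which the paper invokes by citation without reproducing: the probability-preservation bound $P(\mathcal{S})\le\bigl(e^{\epsilon(\alpha)}Q(\mathcal{S})\bigr)^{(\alpha-1)/\alpha}$ obtained from H\"older's inequality, followed by a case split. You split on $P(\mathcal{S})$ versus $\delta$, whereas Mironov splits on $e^{\epsilon(\alpha)}Q(\mathcal{S})$ versus $\delta^{\alpha/(\alpha-1)}$; both splits work equally well here.
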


\begin{restatable}{lemma}{RDPcovariance} 
\label{lem:RDP-covariance} 
Let $\alpha>1$ and let $P=\mathcal N(\mu,\Sigma)$ and $Q=\mathcal N(\mu',\Sigma)$
be two $d$-dimensional Gaussians with the same positive–definite covariance $\Sigma\in\mathbb{R}^{d\times d}$. Then 
\[
D_\alpha(P\|Q)=\frac{\alpha}{2}(\mu-\mu')^\top\Sigma^{-1}(\mu-\mu'). 
\]
\end{restatable}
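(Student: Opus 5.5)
We compute the Rényi divergence directly from the Gaussian densities. With $\Sigma$ positive definite we can write $p(x)\propto\exp\bigl(-\tfrac12(x-\mu)^\top\Sigma^{-1}(x-\mu)\bigr)$ and $q(x)\propto\exp\bigl(-\tfrac12(x-\mu')^\top\Sigma^{-1}(x-\mu')\bigr)$. Since the covariances coincide, the normalizing constants cancel in the ratio. It is convenient to use the equivalent form $\mathbb{E}_{x\sim Q}[(P/Q)^\alpha]=\int p^\alpha q^{1-\alpha}\,dx$. We therefore need to evaluate $\int p(x)^\alpha q(x)^{1-\alpha}\,dx$.

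\textbf{Reduction.} First we whiten, substituting $y=\Sigma^{-1/2}(x-\mu')$ and setting $v=\Sigma^{-1/2}(\mu-\mu')$. Rényi divergence is invariant under this bijective affine map (the Jacobian cancels in $p^\alpha q^{1-\alpha}$, since it appears with total exponent $\alpha+(1-\alpha)=1$). The problem then reduces to $P=\mathcal N(v,\mathbf I_d)$ and $Q=\mathcal N(0,\mathbf I_d)$.

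\textbf{Completing the square.} The exponent of $p^\alpha q^{1-\alpha}$ is
\[
-\tfrac12\bigl[\alpha\|y-v\|^2+(1-\alpha)\|y\|^2\bigr]=-\tfrac12\|y-\alpha v\|^2+\tfrac12(\alpha^2-\alpha)\|v\|^2 .
\]
The coefficients of $\|y\|^2$ sum to $1$, so the remaining integral is that of a standard Gaussian density centered at $\alpha v$ and equals $1$. This gives
\[
\int p^\alpha q^{1-\alpha}\,dy=\exp\bigl(\tfrac{\alpha(\alpha-1)}{2}\|v\|^2\bigr).
\]
Taking the logarithm and dividing by $\alpha-1$ yields $D_\alpha(P\|Q)=\tfrac{\alpha}{2}\|v\|^2=\tfrac{\alpha}{2}(\mu-\mu')^\top\Sigma^{-1}(\mu-\mu')$.

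\textbf{Main point of care.} The only step needing attention is the square-completion algebra, specifically confirming that the cross terms produce exactly $\alpha(\alpha-1)\|v\|^2/2$. Positive definiteness of $\Sigma$ guarantees that $\Sigma^{-1/2}$ exists. The integral converges for every $\alpha>1$, because the quadratic form stays positive definite (coefficient $1$), so no restriction on $\alpha$ arises.
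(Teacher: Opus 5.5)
Your proof is correct. The change of variables is legitimate because the Jacobian $\det\Sigma^{1/2}$ cancels against the normalizing factor $\det\Sigma^{-1/2}$, since the exponents sum to $\alpha+(1-\alpha)=1$. The identity $\alpha\|y-v\|^2+(1-\alpha)\|y\|^2=\|y-\alpha v\|^2-\alpha(\alpha-1)\|v\|^2$ also checks out, giving $D_\alpha=\tfrac{\alpha}{2}\|v\|^2=\tfrac{\alpha}{2}(\mu-\mu')^\top\Sigma^{-1}(\mu-\mu')$.

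The paper gives no proof of this lemma. It states it as the standard closed form for the R\'enyi divergence between equal-covariance Gaussians, the same computation behind the R\'enyi-DP analysis of the Gaussian mechanism, and uses it directly in the proof of Theorem~\ref{thm:vpp-Prototype-Release}. Your calculation supplies that omitted derivation.
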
 

\begin{corollary}
\label{cor:isotropic-covariance}
For isotropic covariance $\Sigma=\sigma^{2}\mathbf I_d$, 
\[
D_\alpha\bigl(\mathcal N(\mu,\sigma^2\mathbf I_d)\,\|\,\mathcal N(\mu',\sigma^2\mathbf I_d)\bigr)
= \frac{\alpha}{2\sigma^2}\,\|\mu-\mu'\|_2^{2}.
\]
\end{corollary}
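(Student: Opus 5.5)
The statement to prove is Corollary~\ref{cor:isotropic-covariance}. The plan is to specialize Lemma~\ref{lem:RDP-covariance} to $\Sigma=\sigma^2\mathbf I_d$, which is positive definite whenever $\sigma>0$.

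Taking Lemma~\ref{lem:RDP-covariance} as given, the first step is to compute the inverse covariance,
\[
\Sigma^{-1}=\sigma^{-2}\mathbf I_d .
\]
Substituting this into the quadratic form gives
\[
(\mu-\mu')^\top\sigma^{-2}\mathbf I_d(\mu-\mu')=\sigma^{-2}\|\mu-\mu'\|_2^2 .
\]
Multiplying by $\alpha/2$ then yields the claimed expression.

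If the lemma were not assumed, I would prove it directly for the isotropic case. The approach is to factor both densities across coordinates and then evaluate
\[
\mathbb E_{x\sim Q}\bigl[(P(x)/Q(x))^\alpha\bigr]
\]
by completing the square in the Gaussian exponent. This integral is the only nontrivial step. The resulting exponent is
\[
\frac{\alpha(\alpha-1)}{2\sigma^2}\|\mu-\mu'\|_2^2 ,
\]
and dividing by $\alpha-1$ after taking the logarithm gives the formula.

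The main obstacle is merely the Gaussian integral in that direct route. Through the lemma, the proof is a one-line substitution.
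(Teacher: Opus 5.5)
Your proposal is correct and follows the paper's route: the paper gives no separate proof and treats the corollary as the immediate specialization of Lemma~\ref{lem:RDP-covariance} with $\Sigma^{-1}=\sigma^{-2}\mathbf I_d$, exactly as in your substitution. Your optional direct computation is also right, including the exponent $\alpha(\alpha-1)\|\mu-\mu'\|_2^2/(2\sigma^2)$ and the remark that $\sigma>0$ is needed for positive definiteness.
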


\PrivacyforPrototypeRelease*
\begin{proof}
We view the isotropic Gaussian mechanism with sensitivity $\Delta$ that achieves $(\epsilon_2,\delta)$--LDP as a \emph{reference mechanism}. 
Let $v = f(\mathcal{D})-f(\mathcal{D}')$ denote the difference vector for adjacent datasets $\mathcal{D}\sim\mathcal{D}'$. Based on our partition, we decompose this vector into $v=[v_A;v_B]$. 
The groupwise clipping bound the sensitivities such that $\|v_A\|_2 \le \Delta_A$ and $\|v_B\|_2 \le \Delta_B$, satisfying $\Delta^2 = \Delta_A^2 + \Delta_B^2$.

\noindent \underline{\textit{Reference (isotropic) release.}}
For covariance $\Sigma_{\mathrm{ref}}=(\sigma_{\mathrm{ref}}\Delta)^2\mathbf I_d$, Corollary~\ref{cor:isotropic-covariance} gives
\begin{equation}
\label{eq:iso-rdp}
\epsilon(\alpha)_{\mathrm{ref}}
= \frac{\alpha}{2}\frac{\|v\|_2^2}{(\sigma_{\mathrm{ref}}\Delta)^2}
\le \frac{\alpha}{2}\frac{\Delta^2}{(\sigma_{\mathrm{ref}}\Delta)^2}
= \frac{\alpha}{2\sigma_{\mathrm{ref}}^2}.
\end{equation} 

\noindent \underline{\textit{VPP (groupwise) release.}}
For covariance
$\Sigma_{\mathrm{VPP}}=\mathrm{diag}\!\bigl((\sigma_A\Delta_A)^2\mathbf I_{d_A},\,(\sigma_B\Delta_B)^2\mathbf I_{d_B}\bigr)$,
Lemma~\ref{lem:RDP-covariance} yields
\begin{align} 
\label{eq:vpp-rdp}
\epsilon(\alpha)_{\mathrm{VPP}}
= \frac{\alpha}{2} v^\top \Sigma_{\mathrm{VPP}}^{-1} v 
&\le \frac{\alpha}{2}\!\left(
\frac{\|v_A\|_2^2}{(\sigma_A\Delta_A)^2}
+\frac{\|v_B\|_2^2}{(\sigma_B\Delta_B)^2}\right) \nonumber\\ 
&\le \frac{\alpha}{2}\!\left(\frac{1}{\sigma_A^2}+\frac{1}{\sigma_B^2}\right).
\end{align}
Provided that the harmonic calibration condition in~\eqref{eq:calibration} holds, comparing~\eqref{eq:iso-rdp} and~\eqref{eq:vpp-rdp} guarantees that $\epsilon_{\mathrm{VPP}}(\alpha) \le \epsilon_{\mathrm{ref}}(\alpha)$ for all $\alpha > 1$.  
By Lemma~\ref{lem:rdptodp}, the $(\epsilon_2,\delta)$–LDP guarantee proven for the reference mechanism seamlessly transfers to VPP. Thus, the VPP prototype release strictly satisfies $(\epsilon_2,\delta)$–LDP over $T$ rounds for client $m$.  
Finally, as established in Theorem~\ref{thm:wei-noise}, the reference multiplier must satisfy $\sigma_{\mathrm{ref}} \ge c_2 \sqrt{T \ln(1/\delta)}/\epsilon_2$, completing the proof.
\end{proof} 

\vspace{0.5mm}
\noindent \textbf{Remark on Weight Allocation.}
A straightforward choice satisfying the condition in~\eqref{eq:calibration} is $\sigma_A = \sigma_{\mathrm{ref}}/\sqrt{w}$ and $\sigma_B = \sigma_{\mathrm{ref}}/\sqrt{1-w}$ for any $w\in(0,1)$, which ensures $1/\sigma_A^2 + 1/\sigma_B^2 = 1/\sigma_{\mathrm{ref}}^2$ holds with exact equality. This parameterization calibrates VPP to the precise $(\epsilon_2,\delta)$–LDP budget of the isotropic reference mechanism while strategically redistributing noise across subspaces. By fixing $w$ according to the group dimensions (as defined in~\eqref{eq:weights}), the noise allocation becomes fully adaptive to the discriminability-aware partition, automatically assigning smaller multipliers (i.e., less noise) to the more informative subspace without requiring manual hyperparameter tuning.

\section{Experimental Supplement} 
\begin{table*}[t]
\caption{\textbf{Comparison} of Average Accuracy (AVG) and Standard Deviation (STD) under domain skew at $\epsilon=0.5$. The \textbf{best} is marked.} 
\vspace{-2mm}
\centering
\small
\renewcommand{\arraystretch}{1.25}
\fontsize{8}{7.66}\selectfont 
\setlength{\tabcolsep}{1pt}
\begin{tabularx}{\linewidth}
{l l||*{4}{>{\centering\arraybackslash}X}|*{2}{>{\centering\arraybackslash}X}|
 *{4}{>{\centering\arraybackslash}X}|*{2}{>{\centering\arraybackslash}X}|
 *{4}{>{\centering\arraybackslash}X}|*{2}{>{\centering\arraybackslash}X}}
\noalign{\hrule height 0.8pt}
\rowcolor{gray!25}
\multicolumn{2}{c||}{\textbf{Methods}} & \multicolumn{6}{c|}{\textbf{Digits}} & \multicolumn{6}{c|}{\textbf{Office–Caltech}} & \multicolumn{6}{c}{\textbf{PACS}} \\
\cline{3-8} \cline{9-14} \cline{15-20} 
\rowcolor{gray!25}
Framework & +LDP & MNI. & USPS & SVHN & SYN & AVG$\uparrow$ & STD$\downarrow$
& Amz.  & Cal.  & DSR.  & Web. & AVG$\uparrow$ & STD$\downarrow$
& P.    & AP.   & Ct.  & Sk.  & AVG$\uparrow$ & STD$\downarrow$ \\ 
\hline\hline
  & +IGPP  & 98.08 & 93.57 & 89.41  & 95.25 & 94.08  & 3.63
           & 95.27 & 91.86 & 76.42 & 95.03  & 89.64  & 8.95
           & 98.10 & 92.62 & 91.24 & 79.48  & 90.36  & 7.84 \\
\multirow{-2}{*}{FedProto~\cite{FedProto2022}}
  & \textbf{+VPDR}  & 98.17  & 94.32  & 92.55  & 97.70  & \cellcolor{vpdr}{\textbf{95.69}}  & \cellcolor{vpdr}{\textbf{2.70}}
           & 96.34 & 93.75 & 83.87 & 96.61  & \cellcolor{vpdr}{\textbf{92.64}}  & \cellcolor{vpdr}{\textbf{5.99}}
           & 98.10 & 95.56 & 91.38 & 83.35  & \cellcolor{vpdr}{\textbf{92.10}}  & \cellcolor{vpdr}{\textbf{6.46}} \\
\hline
  & +IGPP  & 98.27  & 80.70  & 92.43  & 65.25  & 84.16  & 14.57
           & 95.81 & 89.29 & 61.29 & 94.64  & 85.26  & 16.23
           & 97.66 & 83.44 & 62.61 & 51.92  & 73.91  & 20.54 \\
\multirow{-2}{*}{FedPCL~\cite{fedpcl2022}}
  & \textbf{+VPDR}   & 98.47  & 85.62  & 92.89  & 75.20  & \cellcolor{vpdr}{\textbf{88.05}}  & \cellcolor{vpdr}{\textbf{10.05}}
           & 96.34 & 91.96 & 72.19 & 95.92  & \cellcolor{vpdr}{\textbf{89.10}}  & \cellcolor{vpdr}{\textbf{11.45}}
           & 97.31 & 83.62 & 77.56 & 55.92  & \cellcolor{vpdr}{\textbf{78.60}}  & \cellcolor{vpdr}{\textbf{17.23}} \\
\hline
  & +IGPP  & 98.09  & 92.52  & 91.49  & 96.45  & 94.64  & 3.14
           & 94.53 & 91.86 & 81.65 & 95.77  & 90.95  & 6.41
           & 98.20 & 92.91 & 90.81 & 79.75  & 90.42  & 7.76 \\
\multirow{-2}{*}{FPL~\cite{fplhuang2023}}
  & \textbf{+VPDR}  & 98.24  & 94.22  & 93.03  & 97.30  & \cellcolor{vpdr}{\textbf{95.70}}  & \cellcolor{vpdr}{\textbf{2.47}}
           & 96.08 & 92.41 & 83.87 & 98.31  & \cellcolor{vpdr}{\textbf{92.67}}  & \cellcolor{vpdr}{\textbf{6.35}}
           & 98.40 & 96.56 & 91.74 & 81.64  & \cellcolor{vpdr}{\textbf{92.09}}  & \cellcolor{vpdr}{\textbf{7.51}} \\
\hline
  & +IGPP  & 97.79  & 89.94  & 92.07  & 89.20  & 92.25  & 3.89
           & 95.29 & 88.84 & 79.42 & 96.42  & 89.99  & 7.80
           & 97.90 & 90.95 & 82.69 & 65.99  & 84.38  & 13.75 \\
\multirow{-2}{*}{FedPLVM~\cite{fedplvm2024}}
  & \textbf{+VPDR}  & 98.14  & 91.59  & 93.17  & 91.45  & \cellcolor{vpdr}{\textbf{93.59}}  & \cellcolor{vpdr}{\textbf{3.13}}
           & 96.34 & 92.86 & 81.65 & 98.04  & \cellcolor{vpdr}{\textbf{92.22}}  & \cellcolor{vpdr}{\textbf{7.37}}
           & 98.80 & 92.18 & 86.54 & 68.84  & \cellcolor{vpdr}{\textbf{86.59}}  & \cellcolor{vpdr}{\textbf{12.85}} \\
\hline
  & +IGPP  & 97.59  & 93.82  & 90.61  & 96.90  & 94.73  & 3.20
           & 95.81 & 91.96 & 83.87 & 96.61  & 92.06  & \textbf{5.83}
           & 98.20 & 95.11 & 90.17 & 81.91  & 91.35  & \textbf{7.11} \\
\multirow{-2}{*}{FedTGP~\cite{fedtgp2024}}
  & \textbf{+VPDR}  & 97.86  & 95.22  & 92.42  & 98.40  & \cellcolor{vpdr}{\textbf{95.97}}  & \cellcolor{vpdr}{\textbf{2.75}}
           & 96.86 & 95.09 & 84.15 & 100  & \cellcolor{vpdr}{\textbf{94.03}}  & \cellcolor{vpdr}{6.89}
           & 99.10 & 97.56 & 92.95 & 83.06  & \cellcolor{vpdr}{\textbf{93.17}}  & \cellcolor{vpdr}{7.23} \\
\hline
  & +IGPP  & 97.56  & 93.92  & 92.83  & 97.60  & 95.48  & \textbf{2.47}
           & 91.10 & 91.07 & 80.65 & 94.92  & 89.44  & \textbf{6.13}
           & 99.20 & 95.84 & 92.31 & 82.68  & 92.51  & 7.13 \\
\multirow{-2}{*}{MPFT~\cite{zhang2025mpft}}
  & \textbf{+VPDR}  & 98.55  & 94.53  & 93.50  & 98.80  & \cellcolor{vpdr}{\textbf{96.34}}  & \cellcolor{vpdr}{2.73}
           & 97.38 & 91.07 & 83.87 & 98.31  & \cellcolor{vpdr}{\textbf{92.66}}  & \cellcolor{vpdr}{6.68}
           & 99.20 & 96.09 & 93.52 & 83.06  & \cellcolor{vpdr}{\textbf{92.97}}  & \cellcolor{vpdr}{\textbf{7.00}} \\
\noalign{\hrule height 0.8pt}
\end{tabularx}
\label{tab:all_datasets_eps05}
\end{table*}

\subsection{Details of Datasets}\label{app:dataset}
We evaluate our method on three multi-domain benchmarks: Digits, Office–Caltech, and PACS. Below we provide a detailed description of each benchmark.
\begin{itemize}[itemsep=1pt, topsep=2pt, leftmargin=10pt]
\item \textbf{Digits}~\cite{hull1994database, lecun1998gradient, netzer2011reading, roy2018effects} consists of several handwritten digit recognition datasets, including MNIST, USPS, SVHN, and synthetic digits. These datasets cover a variety of digit recognition tasks, providing a testbed for domain adaptation and skew studies.
\item \textbf{Office-Caltech}~\cite{gong2012geodesic} is a benchmark for object recognition and consists of four domains: Caltech-256, Amazon, DSLR, and Webcam. The dataset is commonly used to evaluate domain adaptation and generalization across real-world and synthetic image domains.
\item \textbf{PACS}~\cite{li2017deeper} is a cross-domain dataset for domain generalization. It includes four domains: Photo, Art Painting, Cartoon, and Sketch. PACS is widely used to assess model robustness and generalization in visual recognition across multiple domains.
\end{itemize}
Additionally, we use CIFAR-10 in the single-domain setting to study the scalability under label skew. 
\begin{itemize}[itemsep=1pt, topsep=2pt, leftmargin=10pt]
\item \textbf{CIFAR-10}~\cite{krizhevsky2009learning} consists of 60,000 32x32 color images across 10 classes, with 50,000 training images and 10,000 test images. In this setting, we simulate label skew by partitioning the dataset among clients, where each client receives a non-uniform distribution of class labels. Specifically, we achieve label skew by applying a Dirichlet distribution \( \text{Dir}(\alpha) \) with \( \alpha > 0 \) to sample the class proportions for each client. The parameter \( \alpha \) controls the degree of skew, with smaller values leading to more imbalanced local distributions.
\end{itemize}

\subsection{Hyperparameters of ProtoPFLs}
\label{app:Beseline Hyperparameters}
Table~\ref{tab:hyperparams} lists hyperparameters for ProtoPFL frameworks. Hyperparameters in different methodologies may share the same notation but represent distinct meanings.

\begin{table}[hbpt]
\centering
\caption{\textbf{Hyperparameters chosen} for different methods.}
\label{tab:hyperparams} 
\vspace{-2mm}
\setlength{\tabcolsep}{4pt}
\fontsize{8}{7.66}\selectfont
\renewcommand{\arraystretch}{1.25}
\begin{tabularx}{\columnwidth}{l || l |Y}
\noalign{\hrule height 0.8pt}
\rowcolor{gray!25} 
Methods & Hyperparameters & Value \\
\hline\hline
FedProto & Proximal weight $\lambda$ & 0.1 \\
\hline
\multirow{1}{*}{FedPCL} & Contrastive temperature $\tau$ & 0.07 \\ 
\hline
\multirow{1}{*}{FPL} & Contrastive temperature $\tau$ & 0.07 \\ 
\hline
\multirow{3}{*}{FedPLVM} 
 & $\alpha$-sparsity $\alpha$ & 0.25 \\
 & Contrastive temperature $\tau$ & 0.07 \\
 & Proto loss weight $\lambda$ & 0.5 \\
\hline
\multirow{3}{*}{FedTGP} 
 & Prototype regularization weight $\lambda$ & 1.0 \\
 & Server training rounds & 100.0 \\
 & margin learning threshold & 100.0 \\
\hline
\multirow{4}{*}{MPFT} 
 & Distill temperature $T$ & 4 \\
 & Distill weight $\beta$ & 0.3 \\
 & Clustering sampling rate $r$ & 0.1 \\
 & Server learning rate $\eta_s$ & 0.001 \\
 & Server training rounds & 20 \\
\noalign{\hrule height 0.8pt}
\end{tabularx}
\end{table}

\subsection{More Results for Section~\ref{sec:Performance Comparison}}
\label{app:More Results of Performance Comparison} Table~\ref{tab:all_datasets_eps05} demonstrates that VPDR consistently outperforms IGPP in terms of average accuracy (AVG) across all ProtoPFL frameworks and benchmarks at $\epsilon=0.5$. Furthermore, the qualitative t-SNE visualizations for the Digits dataset at $\epsilon=1.0$ (Figure~\ref{fig:tsne_digits}) reveal that VPDR produces tighter and more distinct clusters as training progresses into the later rounds.

\begin{figure}[t]
  \centering
  \begin{minipage}[t]{0.24\linewidth}
    \centering
    {\fontsize{6}{7}\selectfont\itshape MNIST: 95.52 @T=1}
    \includegraphics[width=\linewidth]{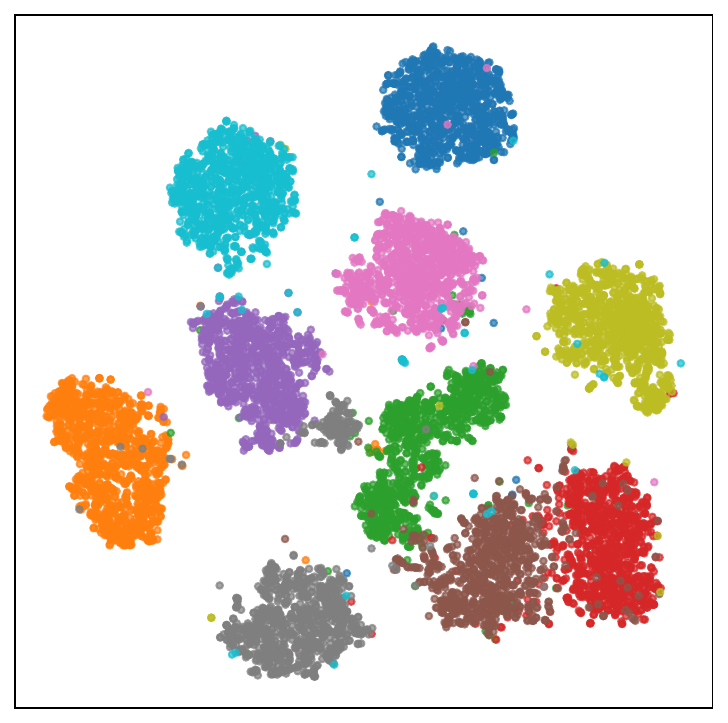}
  \end{minipage}\hfill
  \begin{minipage}[t]{0.24\linewidth}
    \centering
    {\fontsize{6}{7}\selectfont\itshape USPS: 82.96 @T=1}
    \includegraphics[width=\linewidth]{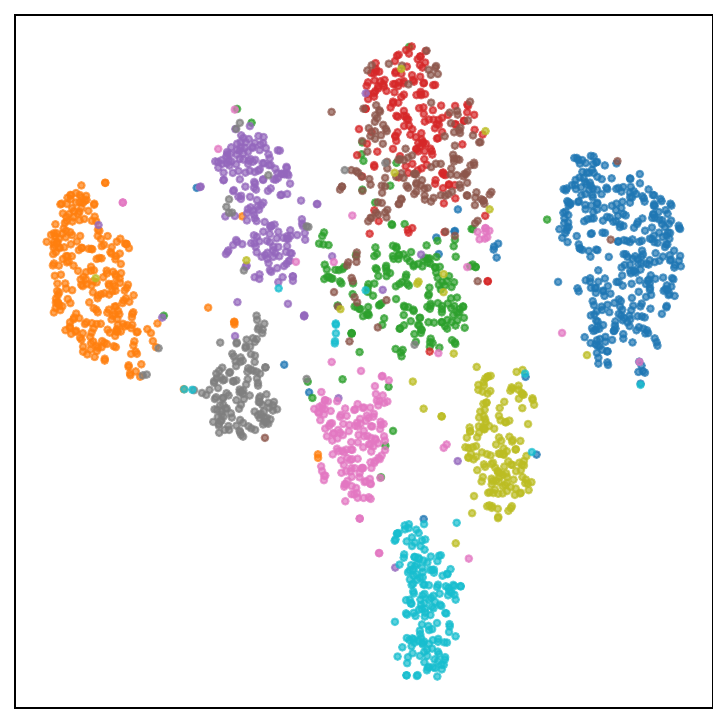}
  \end{minipage}\hfill
  \begin{minipage}[t]{0.24\linewidth}
    \centering
    {\fontsize{6}{7}\selectfont\itshape SVHN: 85.79 @T=1}
    \includegraphics[width=\linewidth]{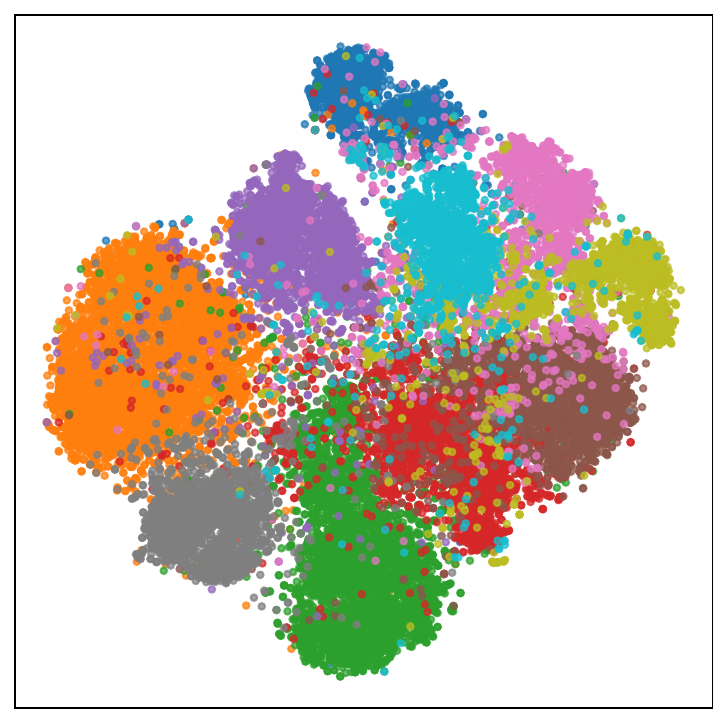}
  \end{minipage}\hfill
  \begin{minipage}[t]{0.24\linewidth}
    \centering
    {\fontsize{6}{7}\selectfont\itshape SYN: 34.55 @T=1}
    \includegraphics[width=\linewidth]{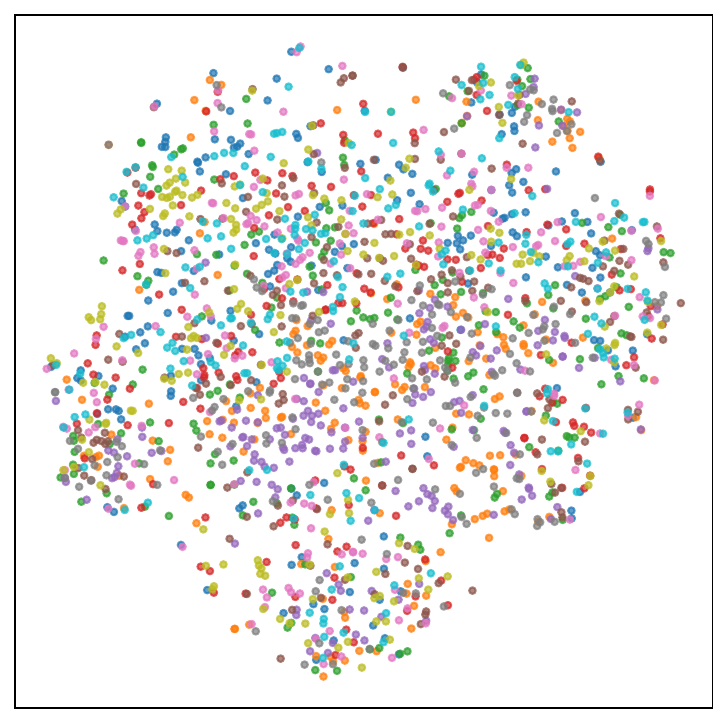}
  \end{minipage}
  
  \vspace{1ex}  
  \begin{minipage}[t]{0.24\linewidth}
    \centering
    {\fontsize{6}{7}\selectfont\itshape MNIST: 98.31 @T=20}
    \includegraphics[width=\linewidth]{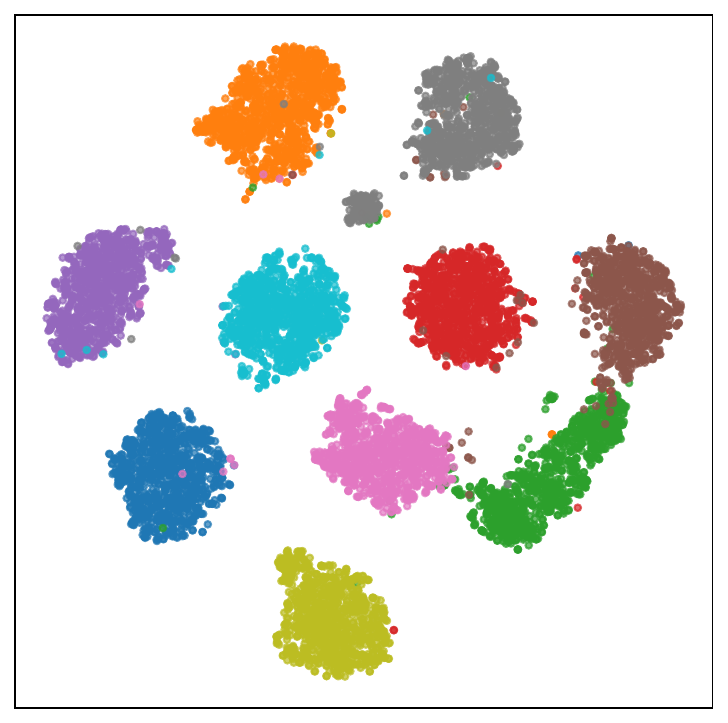}
  \end{minipage}\hfill
  \begin{minipage}[t]{0.24\linewidth}
    \centering
    {\fontsize{6}{7}\selectfont\itshape USPS: 94.87 @T=20}
    \includegraphics[width=\linewidth]{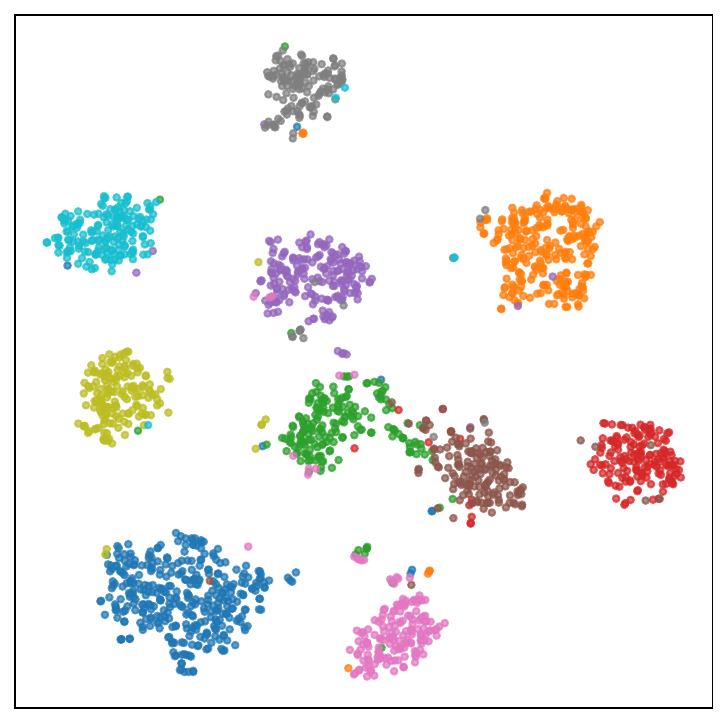}
  \end{minipage}\hfill
  \begin{minipage}[t]{0.24\linewidth}
    \centering
    {\fontsize{6}{7}\selectfont\itshape SVHN: 92.33 @T=20}
    \includegraphics[width=\linewidth]{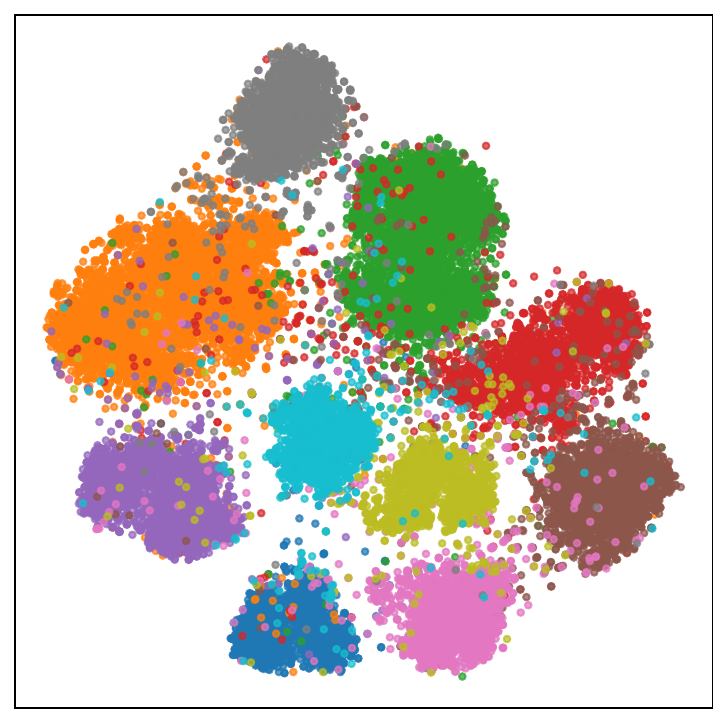}
  \end{minipage}\hfill
  \begin{minipage}[t]{0.24\linewidth}
    \centering
    {\fontsize{6}{7}\selectfont\itshape SYN: 98.70 @T=20}
    \includegraphics[width=\linewidth]{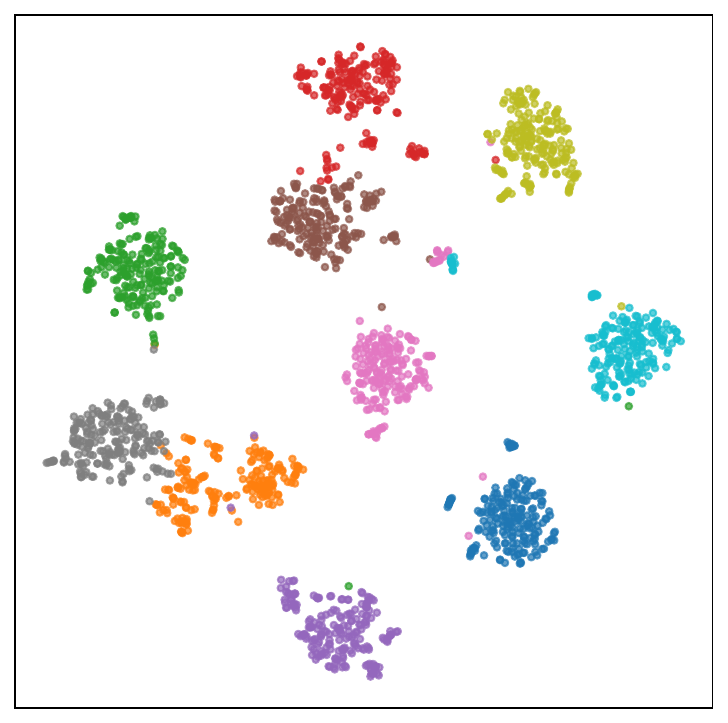}
  \end{minipage}
   \caption{\textbf{T-SNE visualization} of FedProto~\cite{FedProto2022} framework with VPDR on Digits at $T=1$ (top row) and $T=20$ (bottom row). Test accuracy (\%) is shown above each subplot.}
  \label{fig:tsne_digits}
\end{figure} 

\subsection{Model with Data Heterogeneity}
\label{app:model-data-skew}
To evaluate VPDR under highly challenging conditions, we couple architectural diversity with two forms of data skew: (i) domain shift across multi-domain benchmarks, and (ii) label imbalance on CIFAR-10. We instantiate two four-way heterogeneous model families~\cite{fedtgp2024,fedktl2024}: 
\begin{itemize}[itemsep=1pt, topsep=2pt, leftmargin=10pt]
\item \textbf{Heterogeneous Feature Extractors (HtFE$_4$).} Each client uses a different backbone architecture: ResNet-18/34, ViT-Tiny/Small~\cite{dosovitskiy2021vit}. All backbones feed into a shared 64-channel adapter and global- or patch-mean pooling to produce 512-dimensional features, so that prototypes lie in a common embedding space while encoder family, depth, and capacity vary across clients.  
\item \textbf{Heterogeneous Classifiers (HtC$_4$).} All clients share a ResNet-18 backbone and the same 512-dimensional representation, but attach classifier heads with increasing depth and width (from a single 512-d linear layer to 2–3 layer MLPs with hidden sizes up to $1024\to 512$). 

\end{itemize}

Under these two model families, we evaluate two data regimes. For domain skew, we follow the main setup on Digits, Office–Caltech (Office), and PACS and report average test accuracy (\%) at $\epsilon=1.0$ (Table~\ref{tab:Model_heterogeneity_domain_skew}). For label skew, we use Dirichlet splits of CIFAR-10 with $\alpha\in\{0.1,0.5\}$ under the same privacy budget (Table~\ref{tab:Model_heterogeneity_label_skew}). In all cases we run FedProto~\cite{FedProto2022} and FedTGP~\cite{fedtgp2024} with IGPP and our VPDR to test whether the gains of VPDR persist under joint model–data heterogeneity. 
Across all configurations in Tables~\ref{tab:Model_heterogeneity_domain_skew} and~\ref{tab:Model_heterogeneity_label_skew}, VPDR consistently matches or improves upon IGPP for both FedProto and FedTGP. Under domain skew, gains on Digits are modest due to saturated accuracy, but VPDR yields clearly larger improvements on Office–Caltech and PACS, especially under HtC$_4$. For label skew, VPDR again brings steady gains under HtFE$_4$ and more pronounced gains under HtC$_4$, and never hurts performance even in the highly skewed case with $\alpha=0.1$. Overall, VPDR remains robust and often more effective than IGPP when both model architectures and data distributions are heterogeneous.

\begin{table}[t]
\centering
\caption{\textbf{Model heterogeneity} with domain skew.}
\label{tab:Model_heterogeneity_domain_skew}
\vspace{-2mm}
\setlength{\tabcolsep}{3pt}
\fontsize{8}{7.8}\selectfont
\renewcommand{\arraystretch}{1.25}
\begin{tabularx}{\columnwidth}{ll||YYY|YYY}
\noalign{\hrule height 0.8pt}
\rowcolor{gray!25}
\multicolumn{2}{c||}{} 
& \multicolumn{3}{c|}{HtFE$_4$} 
& \multicolumn{3}{c}{HtC$_4$}\\
\cline{3-8}
\rowcolor{gray!25}
\multicolumn{2}{c||}{Methods}
& Digits & Office & PACS
& Digits & Office & PACS\\
\hline\hline
\multirow{2}{*}{FedProto~\cite{FedProto2022}}
& +IGPP        & 94.93 & 87.26 & 88.81 & 91.54 & 81.90 & 88.54\\
& \textbf{+VPDR} & 95.24 & 88.59 & 90.46 & 93.36 & 84.44 & 91.84\\
\hline
\multirow{2}{*}{FedTGP~\cite{fedtgp2024}}
& +IGPP        & 95.11 & 85.07 & 89.37 & 92.36 & 72.67 & 89.78\\
& \textbf{+VPDR} & 95.31 & 87.58 & 89.97 & 93.52 & 85.25 & 91.56\\
\noalign{\hrule height 0.8pt}
\end{tabularx}
\end{table}

\begin{table}[t]
\centering
\caption{\textbf{Model heterogeneity} with label skew.}
\label{tab:Model_heterogeneity_label_skew}
\vspace{-2mm}
\setlength{\tabcolsep}{2pt}
\fontsize{8}{7.66}\selectfont
\renewcommand{\arraystretch}{1.25}
\begin{tabularx}{\columnwidth}{l l || YY|YY}
\noalign{\hrule height 0.8pt}
\rowcolor{gray!25}
\multicolumn{2}{c||}{} 
& \multicolumn{2}{c|}{HtFE$_4$} 
& \multicolumn{2}{c}{HtC$_4$}\\
\cline{3-6}
\rowcolor{gray!25}
\multicolumn{2}{c||}{Methods} 
& $\alpha=0.1$ & $\alpha=0.5$ 
& $\alpha=0.1$ & $\alpha=0.5$\\
\hline\hline
\multirow{2}{*}{FedProto~\cite{FedProto2022}}
  & +IGPP        & 37.42 & 76.32 & 32.66 & 69.67 \\
  & \textbf{+VPDR} & 37.94 & 76.90 & 34.45 & 71.27 \\
\hline
\multirow{2}{*}{FedTGP~\cite{fedtgp2024}}
  & +IGPP        & 35.16 & 75.22 & 31.38 & 66.60 \\
  & \textbf{+VPDR} & 36.91 & 76.34 & 34.13 & 70.47 \\
\noalign{\hrule height 0.8pt}
\end{tabularx}
\end{table}

\subsection{Generalization to Text Modality} 
\label{app:generalization-text}
To verify that our framework generalizes beyond computer vision tasks, we evaluate VPDR on the AG News dataset, a 4-class topic classification benchmark containing 120,000 training articles (World, Sports, Business, Sci/Tech). Using a RoBERTa-Base backbone ($M=20$, $\epsilon=1.0$, $R=5.0$), Table~\ref{tab:agnews} shows that VPDR consistently outperforms the IGPP baseline across all Dirichlet skew levels. Notably, under extreme label skew ($\alpha=0.1$), VPDR improves average accuracy by nearly 3\%. This confirms that VPDR's noise redistribution and norm regularization effectively preserve semantic representations in dense natural language embeddings just as robustly as they do in visual features.
\begin{table}[hbpt]
\centering
\caption{Average test accuracy (\%) of FedProto on AG News.}
\label{tab:agnews}
\vspace{-2mm}
\setlength{\tabcolsep}{2pt}
\fontsize{8}{7.66}\selectfont
\renewcommand{\arraystretch}{1.25}
\begin{tabularx}{\columnwidth}{l||Y|Y|Y|Y|Y} 
\noalign{\hrule height 0.8pt}
\rowcolor{gray!25}
Methods & $\alpha=0.1$ & $\alpha=0.5$ & $\alpha=1.0$ & $\alpha=5.0$ & $\alpha=10.0$\\
\hline\hline
+ IGPP & 34.27& 72.56& 85.72& 89.36&  90.06\\ 
+ VPDR & \textbf{37.00} & \textbf{73.75} & \textbf{86.51} &  \textbf{89.73}  &  \textbf{90.17} \\ 
\noalign{\hrule height 0.8pt}
\end{tabularx}
\end{table}

\subsection{More Details for Section~\ref{sec:attack}}\label{app:Details of Privacy Attacks} 

\begin{table*}[t] 
\centering 
\caption{\textbf{Privacy Attack Results} of FedProto on Digits under varying privacy budgets.} 
\label{tab:digits-attacks} 
\vspace{-2mm} 
\setlength{\tabcolsep}{3pt} 
\fontsize{8}{7.66}\selectfont 
\renewcommand{\arraystretch}{1.25} 
\begin{tabularx}{\textwidth}{l|l||YYY|YYYY} 
\noalign{\hrule height 0.8pt} 
\rowcolor{gray!25} 
& & \multicolumn{3}{c|}{\textbf{Feature Reconstruction (FSH)}} & \multicolumn{4}{c}{\textbf{Membership Inference (MIA)}}\\ 
\cline{3-5}\cline{6-9}
\rowcolor{gray!25} 
\multirow{-2}{*}{$\epsilon$} & \multirow{-2}{*}{LDP} & Cos Sim\scriptsize{$\downarrow$} & cFFD\scriptsize{$\uparrow$} & Top-1 Hit(\%)\scriptsize{$\downarrow$} & ROC-AUC\scriptsize{$\rightarrow0.5$} & TPR@1\%FPR\scriptsize{$\downarrow$} & Advantage\scriptsize{$\rightarrow0$} & F1 Score\scriptsize{$\downarrow$} \\ 
\hline\hline 
- & NoLDP & 0.9999$\pm$0.0000 & 304.8$\pm$36.5 & 100$\pm$0.00 & 0.6094$\pm$0.0224 & 0.0611$\pm$0.0723 & 0.2037$\pm$0.0358 & 0.5844$\pm$0.0733 \\ 
\hline \multirow{3}{*}{1} 
& +IGPP & 0.8253$\pm$0.0018 & 989.2$\pm$28.0 & 83.33$\pm$1.54 & 0.5531$\pm$0.0188 & 0.0288$\pm$0.0381 & 0.1515$\pm$0.0263 & 0.5444$\pm$0.0820 \\ 
& \textbf{+VPP} & 0.8285$\pm$0.0026 & 933.8$\pm$30.2 & 83.00$\pm$2.05 & 0.5539$\pm$0.0166 & 0.0270$\pm$0.0552 & 0.1539$\pm$0.0246 & 0.5440$\pm$0.0856 \\ 
& \cellcolor{vpdr}\textbf{+VPDR}
& \cellcolor{vpdr}0.8289$\pm$0.0020
& \cellcolor{vpdr}940.9$\pm$27.3
& \cellcolor{vpdr}83.23$\pm$2.37
& \cellcolor{vpdr}0.5460$\pm$0.0282
& \cellcolor{vpdr}0.0257$\pm$0.0177
& \cellcolor{vpdr}0.1586$\pm$0.0213
& \cellcolor{vpdr}0.5447$\pm$0.0766 \\  
\hline \multirow{3}{*}{2} 
& +IGPP & 0.8722$\pm$0.0025 & 764.7$\pm$24.0 & 97.12$\pm$0.92 & 0.5775$\pm$0.0194 & 0.0397$\pm$0.0573 & 0.1788$\pm$0.0275 & 0.5586$\pm$0.0845 \\ 
& \textbf{+VPP} & 0.8736$\pm$0.0027 & 765.7$\pm$25.8 & 97.50$\pm$0.00 & 0.5754$\pm$0.0165 & 0.0383$\pm$0.0549 & 0.1761$\pm$0.0240 & 0.5578$\pm$0.0816 \\ 
& \cellcolor{vpdr}\textbf{+VPDR}
& \cellcolor{vpdr}0.8717$\pm$0.0029
& \cellcolor{vpdr}777.1$\pm$15.2
& \cellcolor{vpdr}97.50$\pm$0.00
& \cellcolor{vpdr}0.5727$\pm$0.0320
& \cellcolor{vpdr}0.0390$\pm$0.0557
& \cellcolor{vpdr}0.1723$\pm$0.0234
& \cellcolor{vpdr}0.5534$\pm$0.0755 \\ 
\noalign{\hrule height 0.8pt} 
\end{tabularx} 
\end{table*}

\begin{table*}[t]
\centering
\caption{\textbf{Privacy Attack Results} of FedProto on PACS under varying privacy budgets.}
\label{tab:pacs_attacks}
\vspace{-2mm} 
\setlength{\tabcolsep}{3pt} 
\fontsize{8}{7.66}\selectfont 
\renewcommand{\arraystretch}{1.25} 
\begin{tabularx}{\textwidth}{l|l||YYY|YYYY} 
\noalign{\hrule height 0.8pt} 
\rowcolor{gray!25} 
& & \multicolumn{3}{c|}{\textbf{Feature Reconstruction (FSH)}} & \multicolumn{4}{c}{\textbf{Membership Inference (MIA)}}\\ 
\cline{3-5}\cline{6-9}
\rowcolor{gray!25} 
\multirow{-2}{*}{$\epsilon$} & \multirow{-2}{*}{LDP} & Cos Sim\scriptsize{$\downarrow$} & cFFD\scriptsize{$\uparrow$} & Top-1 Hit(\%)\scriptsize{$\downarrow$} & ROC-AUC\scriptsize{$\rightarrow0.5$} & TPR@1\%FPR\scriptsize{$\downarrow$} & Advantage\scriptsize{$\rightarrow0$} & F1 Score\scriptsize{$\downarrow$} \\
\hline\hline
- & NoLDP   & 0.9999$\pm$0.0000 & 334.3$\pm$47.7 & 100.00$\pm$0.00 & 0.5642$\pm$0.0110 & 0.0187$\pm$0.0037 & 0.2570$\pm$0.0125 & 0.7258$\pm$0.0043 \\
\hline
\multirow{3}{*}{1} & +IGPP   & 0.7271$\pm$0.0031 & 1620.7$\pm$14.4 & 54.21$\pm$2.81 & 0.5009$\pm$0.0108 & 0.0089$\pm$0.0024 & 0.1097$\pm$0.0128 & 0.7083$\pm$0.0028 \\
& \textbf{+VPP}   & 0.7230$\pm$0.0028 & 1612.2$\pm$19.4 & 56.43$\pm$1.87 & 0.5029$\pm$0.0098 & 0.0094$\pm$0.0031 & 0.1101$\pm$0.0126 & 0.7067$\pm$0.0014 \\
& \cellcolor{vpdr}\textbf{+VPDR}
& \cellcolor{vpdr}0.7219$\pm$0.0028
& \cellcolor{vpdr}1626.2$\pm$0.0125
& \cellcolor{vpdr}55.71$\pm$2.14
& \cellcolor{vpdr}0.5004$\pm$0.0083
& \cellcolor{vpdr}0.0102$\pm$0.0024
& \cellcolor{vpdr}0.1066$\pm$0.0096
& \cellcolor{vpdr}0.7073$\pm$0.0014 \\
\hline
\multirow{3}{*}{2} & +IGPP   & 0.7811$\pm$0.0030 & 1237.8$\pm$11.4 & 65.89$\pm$2.95 & 0.5038$\pm$0.0093 & 0.0087$\pm$0.0032 & 0.1135$\pm$0.0113 & 0.7088$\pm$0.0027 \\
& \textbf{+VPP}   & 0.7874$\pm$0.0028 & 1216.8$\pm$23.5 & 65.57$\pm$1.87 & 0.5075$\pm$0.0107 & 0.0092$\pm$0.0031 & 0.1156$\pm$0.0127 & 0.7085$\pm$0.0030 \\
& \cellcolor{vpdr}\textbf{+VPDR}
& \cellcolor{vpdr}0.7862$\pm$0.0029
& \cellcolor{vpdr}1242.1$\pm$17.9
& \cellcolor{vpdr}65.93$\pm$2.04
& \cellcolor{vpdr}0.5026$\pm$0.0080
& \cellcolor{vpdr}0.0101$\pm$0.0022
& \cellcolor{vpdr}0.1076$\pm$0.0103
& \cellcolor{vpdr}0.7076$\pm$0.0015 \\
\noalign{\hrule height 0.8pt}
\end{tabularx}
\end{table*}

We provide detailed descriptions of the attack protocols and metrics used in the privacy risk evaluation in Section~\ref{sec:attack}.  

\subsubsection{Feature-Space Hijacking (FSH)}
FSH aims to reconstruct an input whose encoder feature matches a target class prototype.
Given a target prototype $\mathbf p^*$ and an encoder path $\phi(\cdot)$ that mirrors the client’s prototype pathway, we minimize
\begin{align*}
\mathcal{L} = \big\|\phi(\mathbf x)-\mathbf p^*\big\|_2^2
+\lambda_{\mathrm{TV}} \mathrm{TV}(\mathbf x),
\end{align*}
where the first term minimizes the feature distance between the reconstructed input and the prototype, and the second term penalizes total variation (TV) to promote smoothness in the reconstructed image.

\noindent\textbf{Metrics for FSH} are as follows.
\begin{itemize}[itemsep=1pt, topsep=2pt, leftmargin=10pt]
\item \textbf{Cosine similarity}: Measures the similarity between the reconstructed feature and the target prototype. Lower values in \([-1, 1]\) indicate weaker alignment and hence stronger privacy.
\item \textbf{Classifier Feature Fr\'echet Distance (cFFD)}: Fr\'echet distance between the distribution of reconstructed features and the real class feature distribution. Larger cFFD indicates greater mismatch and thus better privacy.
\item \textbf{Top-1 Hit (\%)}: Percentage of reconstructed images whose nearest class prototype matches the target class. Lower Top-1 hit means the attacker fails to recover the correct class, reducing privacy risk.
\end{itemize} 

\vspace{0.5mm}
\noindent\textbf{Implementation.}
We optimize an unconstrained parameterization with Adam (learning rate $0.01$, up to $10{,}000$ steps), mapping to $[0,1]$ via $\tanh$. We set $\lambda_{\mathrm{TV}} = 10^{-3}$. For practical federated settings, the attack supports both single-target and batched modes, with early stopping (patience $300$, tolerance $10^{-6}$). Input shapes are inferred from a validation loader, and the feature path $\phi(\cdot)$ reuses the clients’ prototype-extraction branch. We attack up to $10$ classes with one prototype per class and a batch size of $16$.

\subsubsection{Membership Inference Attack (MIA)}
MIA is a passive attack where the adversary aims to determine whether a given sample was part of the model’s training set, under the assumption that training samples lie closer to their class prototypes than non-members.
Let $\phi(\cdot)$ be the encoder path used to form prototypes. For a sample $\mathbf{x}$ and privatized prototype $\tilde{\mathbf p}_m^c$, we define the membership score
\begin{align*}
s(\mathbf x;\tilde{\mathbf p}_m^c)=-\big\|\phi(\mathbf x)-\tilde{\mathbf p}_m^c\big\|_2^2. 
\end{align*}
We compute this score for each sample and compare it against a decision threshold to classify the sample as either a member or not. By sweeping the threshold over all possible values of $s(\cdot)$, we obtain an ROC curve, and summarize attack performance via the area under the curve (AUC).

\noindent\textbf{Metrics for MIA} are as follows.
\begin{itemize}[itemsep=1pt, topsep=2pt, leftmargin=10pt]
\item \textbf{ROC-AUC}: Measures the attacker’s ability to distinguish training-set members from non-members. Values closer to $0.5$ indicate chance-level discrimination and thus stronger privacy. 
\item \textbf{TPR@1\%FPR}: The true positive rate (TPR) at a fixed false positive rate (FPR) of 1\%. Lower TPR values indicate that the attacker is unable to reliably distinguish training samples from non-members at a low FPR.
\item \textbf{Advantage}: Defined as $\text{TPR} - \text{FPR}$ at a chosen threshold, measuring the net gain over random guessing. Values near zero indicate near-chance performance. Unlike AUC, which aggregates performance across all thresholds, this metric focuses on a single threshold. 
\item \textbf{F1 score}: Harmonic mean of precision and recall, computed at the optimal threshold. A lower F1 score indicates poor attack accuracy in distinguishing members from non-members, corresponding to stronger privacy.
\end{itemize}

\vspace{0.5mm}
\noindent\textbf{Implementation.}
The maximum number of samples per class in training and testing is set to 800, and the decision threshold is dynamically calculated by maximizing the F1 score on the precision-recall curve.

\subsubsection{More Results on Digits and PACS}
Tables~\ref{tab:digits-attacks} and~\ref{tab:pacs_attacks} report additional privacy attack results on Digits and PACS under the same threat model and metrics as in Section~\ref{sec:attack}. 
Across both datasets and $\epsilon \in \{1,2\}$, all three LDP mechanisms (IGPP, VPP, and VPDR) consistently reduce FSH success and drive MIA towards near-chance performance compared to the non-private baseline. 
The differences among IGPP, VPP, and VPDR are within one standard deviation and show no systematic trend, confirming that VPP and VPDR attain privacy protection comparable to IGPP under the same LDP budgets.

\subsection{Effect of $d$ and $r$}
\label{app:effect-dr}
Varying the embedding dimension $d$ and the budget split ratio $r$ changes how the total privacy budget is allocated between subspace selection and prototype release, which may affect utility and empirical attack performance. Nevertheless, after calibrating the noise according to our analysis, the overall $(\epsilon,\delta)$--LDP guarantee remains the same.

On PACS (Tab.~\ref{tab:dr_check}), the round-averaged MIA-AUC stays close to chance level ($\approx 0.5$) across all tested $d$ and $r$, indicating no observable increase in membership leakage.
In contrast, accuracy exhibits mild variation with a weak interaction between $d$ and $r$: larger dimensions ($d\ge 512$) slightly prefer a moderate split ($r\approx 0.1$), whereas $d=256$ is more tolerant to larger $r$ (e.g., $r=0.2$). Across all settings, VPDR consistently improves accuracy over IGPP while maintaining comparable (near-chance) AUC.

\begin{table}[hbpt]
\centering
\caption{Average test accuracy and round-averaged MIA-AUC.}
\label{tab:dr_check}
\vspace{-2mm}
\setlength{\tabcolsep}{0.01pt} 
\fontsize{8}{7.5}\selectfont
\renewcommand{\arraystretch}{1.25} 
\begin{tabularx}{\columnwidth}{c||YY|YY|YY|YY}
\noalign{\hrule height 0.8pt}
\rowcolor{gray!25} 
\cline{4-9}
\rowcolor{gray!25} 
\multicolumn{1}{c||}{} & \multicolumn{2}{c|}{IGPP} & \multicolumn{2}{c|}{VPDR ($r{=}0.05$)} & \multicolumn{2}{c|}{VPDR ($r{=}0.1$)} & \multicolumn{2}{c}{VPDR ($r{=}0.2$)} \\
\cline{2-9}
\rowcolor{gray!25} 
\multirow{-2}{*}{$d$} & AVG & AUC & AVG & AUC & AVG & AUC & AVG & AUC \\
\hline\hline
$256$  & 90.80\% & 0.5072 & 92.08\% & 0.5041 & 92.59\% & 0.5051 & \textbf{92.82\%} & 0.5029 \\
$512$ & 90.58\% & 0.5009 & 92.26\% & 0.5011 & \textbf{92.71\%} & 0.5004 & 92.67\% &  0.5015 \\
$1024$ & 91.13\% & 0.5019 & 92.54\% & 0.4966 & \textbf{92.67\%} & 0.5003  & 92.28\% & 0.5011 \\  
\noalign{\hrule height 0.8pt}
\end{tabularx}
\end{table}

\end{document}